\newtheorem{example}{Example}
\newtheorem{theoretical insights}{Theoretical insights}
\newtheorem{practical takeaway}{Practical takeaway}
\DeclareMathAlphabet{\mathcal}{OMS}{cmsy}{m}{n}
\crefname{hypothesis}{Hypothesis}{Hypotheses}
\title{{Objective Value Change and Shape-Based Accelerated Optimization for the Neural Network Approximation}}
\author{
Pengcheng Xie\thanks{Applied Mathematics and Computational Research Division, Lawrence Berkeley National Laboratory (\email{pxie@lbl.gov}). Corresponding author.} 
\and Zihao Zhou\thanks{School of Mathematics and Statistics, Xi'an Jiaotong University} 
\and Zijian Zhou\thanks{School of Mathematics,  Southwest Jiaotong University}
}
\begin{document}

\maketitle

\begin{abstract}  
This paper introduce a novel metric of an objective function \(f\), we say \texttt{VC} (value change) to measure the difficulty and approximation affection when conducting an neural network approximation task, and it numerically supports characterizing the local performance and behavior of neural network approximation. Neural networks often suffer from unpredictable local performance, which can hinder their reliability in critical applications. \texttt{VC} addresses this issue by providing a quantifiable measure of local value changes in network behavior, offering insights into the stability and performance for achieving the neural-network approximation. We investigate some fundamental theoretical properties of \texttt{VC} and identified two intriguing phenomena in neural network approximation: the \texttt{VC}-tendency and the minority-tendency. These trends respectively characterize how pointwise errors evolve in relation to the distribution of \texttt{VC} during the approximation process.In addition, we propose a novel metric based on \texttt{VC}, which measures the distance between two functions from the perspective of variation. Building upon this metric, we further propose a new preprocessing framework for neural network approximation. Numerical results including the real-world experiment and the PDE-related scientific problem support our discovery and pre-processing acceleration method.

\end{abstract}

\begin{keywords}
Neural network, approximation, value change, optimization, optimality
\end{keywords}

\setcounter{tocdepth}{3} 
\tableofcontents

\section{Introduction}

Approximation techniques have always played a pivotal role in the development of numerical algorithms. These methods are essential when exact solutions are either computationally infeasible or difficult to obtain. Traditional approaches to approximation, such as polynomial interpolation, spline methods, and finite element analysis, have been widely used in various fields, including optimization \cite{powell2006newuoa,wild2008orbit,xie2023derivativecombine,xie2024derivativetransform,XIE2025116146,xieyuannew,Xie_Yuan_2022,xie2025remuregionalminimalupdating}, numerical integration, and differential equation solvers \cite{bishop2006pattern}.  
In recent years, the rapid advancement in computational power, particularly the rise of GPUs and parallel processing, has enabled more sophisticated approaches to approximation, notably through the application of neural networks \cite{zhou2022towards}. Neural networks, which were initially conceptualized over 70 years ago, are now at the forefront of modern approximation techniques \cite{hornik1989multilayer, wu2024fedbiot, guolq}. Their ability to model complex and highly nonlinear relationships between inputs and outputs has been proven to be particularly effective in addressing problems that are difficult to tackle using classical methods \cite{goodfellow2016deep}. The resurgence of neural networks, fueled by the emergence of deep learning architectures, has led to significant breakthroughs in various machine learning domains, such as computer vision, natural language processing, and reinforcement learning \cite{lecun2015deep}. More importantly, neural networks have found impact applications in numerical mathematics, particularly in areas like function approximation, optimization, and solving differential equations \cite{larson2019derivative}.  
Neural networks have become a cornerstone of industrial innovation, driving advancements across a wide array of applications \cite{krizhevsky2012imagenet, liu2021swin, yang2024review,NEURIPS2020_44e76e99}. In the realm of computer vision, convolutional neural networks (CNNs) are extensively used for tasks such as image classification, object detection, and quality inspection in manufacturing \cite{krizhevsky2012imagenet}. Similarly, in natural language processing, models based on the Transformer architecture have revolutionized applications like machine translation, sentiment analysis, and conversational AI \cite{liu2021swin}. These industrial applications highlight the effectiveness of neural networks in handling large-scale, complex data with remarkable accuracy and efficiency.

In scientific computing, neural networks are increasingly being used to solve traditional numerical problems with new levels of efficiency and generalization. Physics-informed neural networks (PINNs) \cite{raissi2019physics}, for example, embed partial differential equations into the loss function of neural networks, enabling the solution of complex physical systems without requiring discretized meshes. Additionally, neural surrogates are employed to accelerate simulations in fields such as fluid dynamics \cite{cai2021physics}, materials science \cite{xie2018crystal,noe2020machine}, and quantum mechanics \cite{carleo2017solving}. These developments demonstrate neural networks are not only tools for data-driven tasks but are also emerging as powerful alternatives or supplements to classical numerical solvers in computational science. 
In numerical optimization, neural networks have demonstrated superior performance by approximating the global minimum of non-convex functions, a notoriously difficult problem for classical algorithms \cite{mnih2015human}. Furthermore, methods like neural ordinary differential equations (Neural ODEs) \cite{Chen2018neural} and physics-informed neural networks \cite{raissi2019physics} represent the confluence of neural networks with numerical methods, allowing for the direct incorporation of physical laws and governing equations into the learning process, thus providing highly accurate approximations with reduced computational cost. 

Neural networks have also attracted significant theoretical attention, particularly regarding their capability as universal function approximators. Foundational research has shown that even a single hidden-layer neural network with a sufficient number of neurons can approximate any continuous function on compact subsets of $\mathbb{R}^n$, given appropriate activation functions \cite{cybenko1989approximation}. This universal approximation theorem underpins much of the theoretical motivation for deploying neural networks in diverse domains, and subsequent work has explored the expressive power and convergence properties of deep and wide network architectures \cite{leshno1993multilayer,telgarsky2016benefits,lu2017expressive,barron1993universal,yarotsky2017error}.
Another important theoretical framework is the neural tangent kernel (NTK) \cite{jacot2018neural}. NTK characterizes how a neural network behaves under gradient descent by linearizing its evolution around its initialization. In this regime, the network's training can be described using kernel methods, allowing for rigorous analysis of generalization and convergence properties. The NTK remains constant during training for infinitely wide networks, making it a tractable object of study and providing deep insights into the connection between neural networks and classical kernel methods.
Belkin's work \cite{liu2020linearity} suggests that the constancy of the NTK arises from a transition to linearity, where certain large-scale nonlinear systems become approximately linear within a fixed-size neighborhood around the initialization point.
In subsequent work, Wang applied the NTK theory to the solution of differential equations by neural networks (PINN) \cite{wang2022and}, and Zhou extended it to general nonlinear PDEs \cite{zhou2024neural}.

One of the main objectives of this work is to develop new approaches for characterizing the approximation process of neural networks and to uncover potential underlying patterns. In the approximation process, under translation equivalence, a neural network essentially needs only to approximate the shape of the objective function, a perspective we refer to as shape approximation. Xu et al. proposed the frequency principle \cite{xu2020frequency,xu2024overview}, which examines the approximation process of neural networks in the frequency domain. Their work demonstrates that neural networks consistently prioritize the approximation of the low-frequency components of the objective function—an important result that not only aligns with empirical observations but also corresponds to certain training heuristics, such as early stopping.

 {\bf Main contributions.}
 This paper presents significant advancements in neural network approximation by exploring a minority-tendency phenomenon and introducing a novel pre-processing technique based on value change (\texttt{VC}) theory. The primary contributions are outlined as follows:

 \begin{itemize}
     \item[-] \textcolor{black}{We found that the process of neural networks approximating functions is related to the local variation of the objective function, which we refer to as ``value change'' (\texttt{VC}).}

     \item[-] \textcolor{black}{A \texttt{VC}-tendency phenomenon is observed in neural network approximation, suggesting that \texttt{VC} dominates the approximation behavior under certain conditions.}

     \item[-] \textcolor{black}{From the perspective of the \texttt{VC} distribution function, we observe the minority-tendency phenomenon, where the approximation tends to prioritize regions corresponding to lower distribution values.}

     \item[-] \textcolor{black}{We propose a novel norm (\texttt{IVC} distance) based on VC theory that captures only the differences in local variations between two functions, independent of their specific function values.}

     \item[-] \textcolor{black}{We propose a novel preprocessing criterion for neural network training, designed to minimize the \texttt{IVC} distance between the neural network (after preprocessing) and the objective function, thereby facilitating faster convergence and improving overall learning efficiency.}

 \end{itemize}

The remainder of this paper is organized as follows. In Section~\ref{motvation}, we present the motivation behind our research, including insights derived from two simple linear function approximation processes. In Section~\ref{sec3}, we introduce an efficient framework for characterizing neural network approximation, referred to as (objective) value change (VC). Section~\ref{sec4} discusses two VC-related phenomena, namely the VC-tendency and the Minority-tendency. In Section~\ref{sec5}, we propose a novel VC-based norm, upon which we develop a neural network preprocessing method that significantly reduces training time and improves accuracy. Experimental results validating the effectiveness of this approach are presented in Section~\ref{sec6}.
Finally, Section~\ref{sec7} concludes the paper.

\section{\textcolor{black}{
Positive correlation between approximation and the slope}}\label{motvation}

{\color{black} In this section, we present several basic examples to reveal certain patterns in neural network approximation. These examples demonstrate that the approximation error of the neural network is directly related to the slope of the objective function.} 
In order to reveal some basic laws in the approximation process of neural networks, we consider the approximation problem of fully connected neural networks (FCNNs, also known as multilayer perceptron (MLP)) under the mean squared error (MSE) loss function:
\begin{equation}\label{MSE}
    \texttt{Loss}(\psi_{\rm NN})=\frac{1}{N}\sum_{i=1}^N\left|\psi_{\rm NN}({\bm x}_i)-f({\bm x}_i)\right|^2,
\end{equation}
where $\psi_{\rm NN}$ and $f$ represent neural network (approximation model) and objective function, respectively. $N$ in \eqref{MSE} denotes the number of elements in the training or test dataset. {The} standard choice for the FCNN with $L$ layers ($L-1$ hidden layers) is defined recursively as 
\begin{equation*}
\begin{aligned}
\bm{m}^{(0)}({\bm x})=\bm{W}^{(0)}\bm{x}+\bm{b}^{(0)},\ 
\bm{\tilde{m}}^{(i)}({\bm x})=\sigma(\bm{m}^{(i-1)}({\bm x}))\in\mathbb{R}^{N_i},\ 
\bm{m}^{(i)}({\bm x})=\bm{W}^{(i)}\bm{\tilde{m}}^{(i)}({\bm x})+\bm{b}^{(i)},
\end{aligned}
\end{equation*} 
$i=1,\dots, L$, where $\bm{W}^{(i)}(\in\mathbb{R}^{N_{i+1}\times N_{i}})$ and $\boldsymbol{b}^{(i)}(\in\mathbb{R}^{N_{i+1}})$ are the weight matrices to be trained, and $\sigma$ is a coordinate-wise activation function ($\sigma$ is usually chosen as ReLU, Sigmoid or Tanh. In the following numerical experiments, we use Tanh as the activation function). In the usual initialization of FCNN, all the weights and biases are initialized to be independent and identically distributed (i.i.d.) as uniform distribution $U(-\sqrt{k},\sqrt{k})$, where $k$ is the input width of the previous layer of neural networks.

\subsection{Approximation to linear functions}\label{eg_lin}

We begin by investigating simple linear functions using neural networks as a benchmark. Specifically, we examine a set of linear functions with varying slopes to evaluate how the proposed metric responds to different rates of change. The approximation results, along with the corresponding slope values (including the representative slope used in Example~\ref{eg2}), are presented below.

\begin{example}[Approximation to three-dimensional  linear functions] We delve into the influence of the variable slope on the neural network’s ability to approximate three-dimensional linear functions. This investigation aims to understand how changes in slope affect the accuracy and performance of the neural network model. The specific three-dimensional linear functions under consideration are defined as follows:
\begin{equation*}
    f_i(x, y, z)=\kappa x + \kappa y + \kappa z,\ \forall \ (x,y,z)^{\top} \in [-1,1]^3, \ \kappa \in\{1, 10\}.
\end{equation*}
\end{example}

\begin{figure}[htb]
\centering
\includegraphics[width=1.0\textwidth,trim=55 10 50 20,clip]{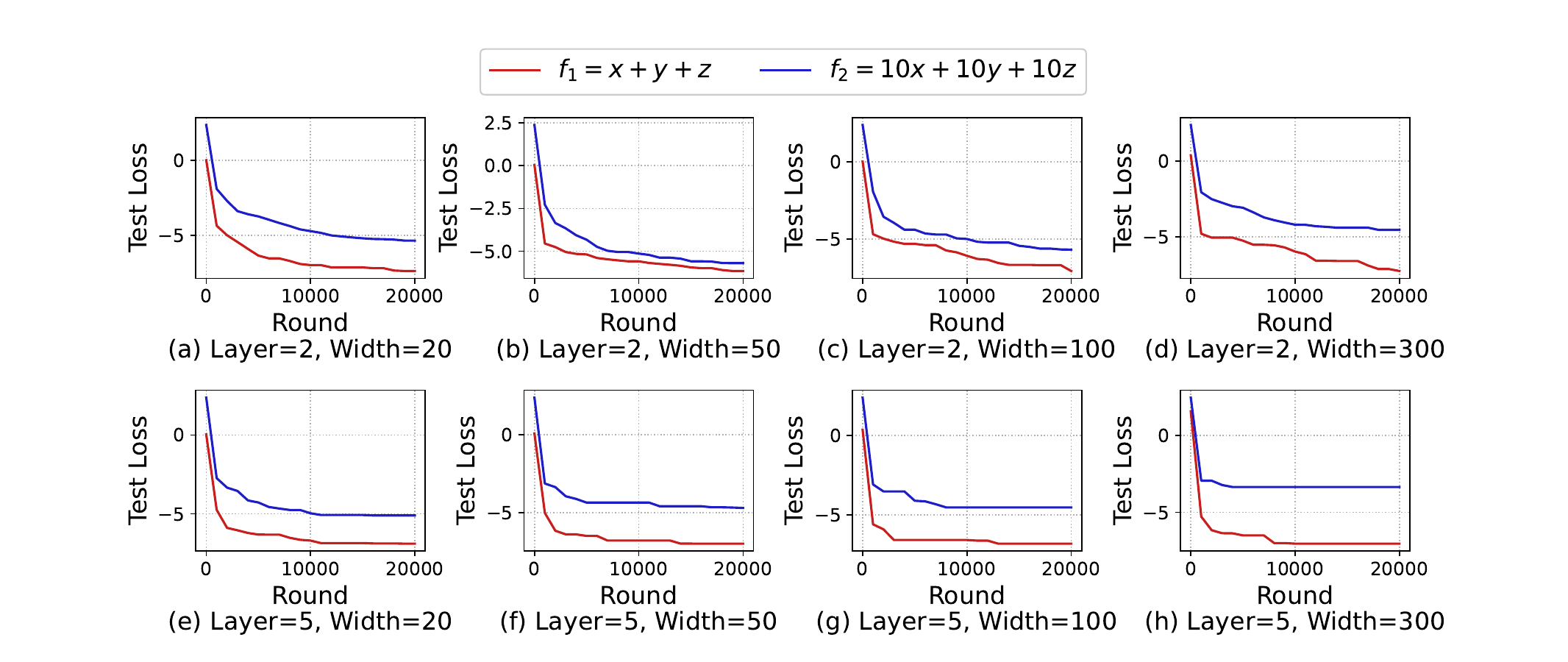}     
\caption{\color{black}{Correlation between linear function slope and approximation speed/efficiency.} Two three-dimensional linear functions with different slopes are approximated by a neural network over $[-1,1]^3$, using the \texttt{Adam} optimizer and mean squared error (MSE) loss. The learning rate is 
\(10^{-2}\).}
\label{fig:linear}
\end{figure}
\begin{sloppypar}
Fig.~\ref{fig:linear} exhibits the test loss curve of neural network approximation with 8 different network settings (Layer$\in\{2,5\}$, Width$\in\{20,50,100,300\}$). The results show that the linear function with a small slope converges faster to the well-trained model. 
{For example,} when ${\rm training\ steps} = 10,000$, the test loss for the $f_2$ approximation is always smaller than the $f_1$ approximation in these examples. In addition, neural network approximation experiments for one-dimensional or two-dimensional linear functions are available at an online repository\footnote{\href{https://github.com/pxie98/L-Change/tree/main/VC}{\ttfamily  https://github.com/pxie98/L-Change/tree/main/VC}}. The experimental results show that the slope of the objective function is negatively correlated with the convergence rate when the neural network approximates a simple (high-dimensional) linear function. 
\end{sloppypar}

\subsection{Approximation to piece-wise functions}

We study the difference in the approximation effect of different segments of a piecewise linear function when neural networks approximate segments with different slopes. We consider using the same preset neural network from Section \ref{eg_lin} to approximate a piece-wise linear function, where the slopes remain constant within each segment.

\begin{example}[Approximation to piece-wise functions]\label{eg2} The approximated objective  is defined in a symmetrical interval with different slopes on the left and right ends, as follows:
\begin{equation}\label{pw}
    f({x})=\left\{
    \begin{aligned}
    & 2x+2,\ \forall\ x\in[-2,0],\\
   &0{\ \ \rm or\ } -x+1,\ \forall\ x \in (0,2].
    \end{aligned}
    \right.
\end{equation}
Since piece-wise linear functions vary greatly at discontinuity points, and neural networks are continuous functions, we compare the test errors on $[-1.5, -0.5]$ and $[0.5, 1.5]$.

\end{example}

During the experiment, neural networks with Layers $=\{2,5\}$ and Width $=\{100,300\}$ were selected for two different right-hand slopes ($f_1=0$ and $f_2=-x+1$). 
Fig.~\ref{fig:piece-wise} shows the approximation results. It can be found that whether it is $f_1$ or $f_2$, under different network settings, the test loss on the right side decreases faster. This means that within the same function, a smaller slope causes the function to be approximated faster by neural networks.

\begin{figure}[htb]
\centering
\includegraphics[width=1.0\textwidth,trim=50 5 50 20,clip]{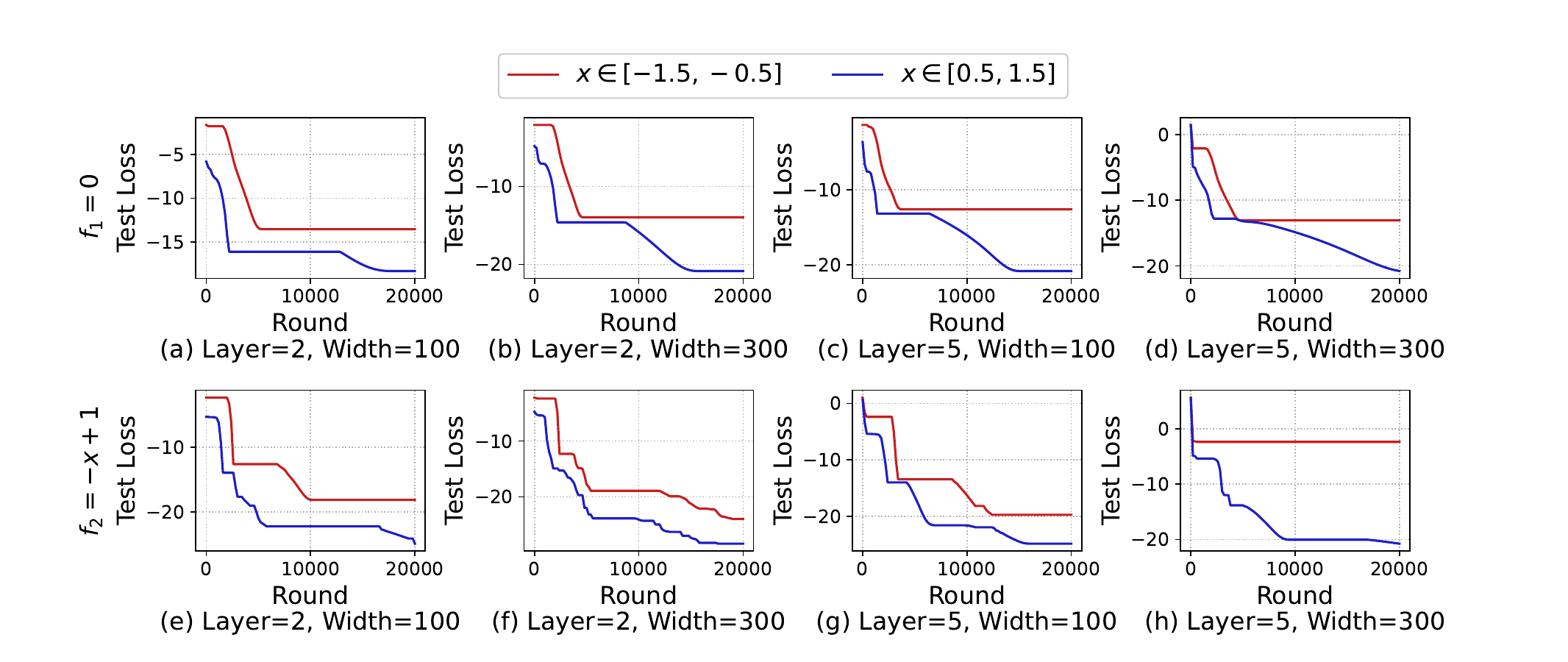} 
\caption{\color{black}{The correlation between different slopes and the approximation speed/efficiency within a single function.} The optimizer is \texttt{Adam} and the loss is MSE. Learning rate is \(10^{-2}\). } 
\label{fig:piece-wise}
\end{figure}

Fig.~\ref{fig:piece-wise} presents the loss curves of neural network approximation across 4 different network settings (Layer $\in\{2,5\}$, Width $\in\{20,100\}$). For instance, at 10,000 training steps, for both objective functions ($f_1$ and $f_2$) under different neural network parameter settings, the cases with smaller slopes converge faster numerically, which follows the truth that the objective with a smaller slope will be easier to be approximated by the neural network.  
The above two examples show that the magnitude of the slope of the objective function significantly affects its performance when approximated by neural networks, whether in the same function or in different functions. Although the slope can be directly derived from the objective function, for general neural networks approximation tasks (the objective function expression is unknown), there are often problems such as less data, and extremely complex objective function, and it is difficult to approximate the derivative. Therefore, the slope (derivative) is not a good indicator of how easily the objective function can be approximated. This prompts us to design a new metric to describe the ease with which the objective function is approached. This will be emphasized in the next section.

\section{\color{black}{Local measurement of function changes}}\label{sec3}

The study of how to characterize the local performance and behavior of neural network approximation is crucial in the theoretical research on deep learning. It is certain that the performance of the neural network approximation is related to the local change of the objective function. {The slope can reflect the local change of the objective function to a certain extent, but it has obvious locality and cannot reflect the influence of two close sampling points on each other when neural networks approach. Therefore, a suitable metric should be able to reflect the interaction between sampling points in the learning process under different circumstances.} 
To effectively quantify these changes, we introduce the concept of \texttt{VC}, a measure that captures the maximum variation of a function over a specified interval. This concept is not only central to understanding the behavior of functions but also plays a significant role in the performance of approximation algorithms. Along with \texttt{VC}, we define its density distribution to gain further insight into how these changes are spread across the domain of the function. Below, we provide formal definitions and explore their implications.

\subsection{\color{black}Value change}

We consider the real objective function \(f:\mathbb{R}^n \rightarrow\mathbb{R}\) in the neural network approximation problem in the interval \(\Omega=[a_1,b_1]\times\dots\times [a_n,b_n] \subset \mathbb{R}^n\), where \(a_i < b_i\), \(\forall \ i=1,\dots,n\). A similar definition is also applicable to other complex objective functions of \texttt{VC}.  

\begin{definition}[The \texttt{VC} of \(f\) at \({\bm x}\)]\label{VC}
For \(\forall\ {\bm x}=(x_1,\dots,x_n)^{\top} \in [a_1,b_1]\times\dots\times [a_n,b_n]\),  the \texttt{VC} of the objective function \(f\) at  \({\bm x}\) is defined as  
\begin{equation}
\texttt{VC}_{L}(f,{\bm x}) = \sup_{{\bm y}_1, {\bm y}_2 \in \Omega \cap \prod_{i=1}^{n} [{x}_i-\frac{L}{2}, {x}_i+\frac{L}{2}]} \ \vert f({\bm y}_1)-f({\bm y}_2)\vert.  
\end{equation}
\end{definition}

The relationship between \texttt{VC} and $L$ is closely related to $f$. 
Specifically, the following proposition holds (we take the one-dimensional case as example).

\begin{proposition}
If $f({x})\in\mathcal{C}^1$, Then $\frac{\partial\texttt{VC}_L(f,{x_0)}}{\partial L}\Big|_{L^+,L=0}=\frac{df(x_0)}{dx}$ ($L^+$ denotes the directional derivative in the $L^+$ direction). 
\end{proposition}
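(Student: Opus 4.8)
The plan is to reduce the claim to the evaluation of the one-sided limit $\lim_{L\to 0^+}\texttt{VC}_L(f,x_0)/L$. Since the constraint set in Definition~\ref{VC} degenerates to the single point $\{x_0\}$ at $L=0$, we have $\texttt{VC}_0(f,x_0)=0$, so by the meaning of the $L^+$-directional derivative,
\[
\frac{\partial\texttt{VC}_L(f,x_0)}{\partial L}\Big|_{L^+,\,L=0}
=\lim_{L\to 0^+}\frac{\texttt{VC}_L(f,x_0)-\texttt{VC}_0(f,x_0)}{L}
=\lim_{L\to 0^+}\frac{\texttt{VC}_L(f,x_0)}{L},
\]
provided this limit exists. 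I would assume, as is implicit in the statement, that $x_0$ lies in the interior of $\Omega$; then for all sufficiently small $L$ the intersection in the definition is the full interval $I_L:=[x_0-\tfrac L2,\,x_0+\tfrac L2]\subset\Omega$, and, because $f$ is continuous on the compact set $I_L$, the supremum over the pair $(y_1,y_2)$ is exactly the oscillation $\operatorname{osc}_{I_L}f=\max_{I_L}f-\min_{I_L}f$, with the max and min attained at some $y_1^\star,y_2^\star\in I_L$.

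For the upper bound I would apply the mean value theorem on $[y_2^\star,y_1^\star]$: there is $\xi_L$ between $y_1^\star$ and $y_2^\star$ with $\texttt{VC}_L(f,x_0)=|f(y_1^\star)-f(y_2^\star)|=|f'(\xi_L)|\,|y_1^\star-y_2^\star|\le L\cdot\sup_{|\xi-x_0|\le L/2}|f'(\xi)|$. Dividing by $L$ and using continuity of $f'$ at $x_0$ gives $\limsup_{L\to0^+}\texttt{VC}_L(f,x_0)/L\le|f'(x_0)|$. For the matching lower bound I would split into two cases. If $f'(x_0)\ne 0$, continuity of $f'$ forces $f'$ to keep a constant sign on $I_L$ for all small $L$, hence $f$ is strictly monotone there, the oscillation equals $|f(x_0+\tfrac L2)-f(x_0-\tfrac L2)|=|f'(\eta_L)|\,L$ for some $\eta_L\in I_L$, and since $\eta_L\to x_0$ the ratio tends to $|f'(x_0)|$. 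If $f'(x_0)=0$, the upper bound already yields $\lim_{L\to0^+}\texttt{VC}_L(f,x_0)/L=0=|f'(x_0)|$. In both cases the limit is $|f'(x_0)|$, which proves the proposition — with the caveat that the right-hand side should be read as $\bigl|\tfrac{df(x_0)}{dx}\bigr|$, since $\texttt{VC}$ is by construction sign-blind and the honest conclusion is the unsigned slope.

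The step I expect to require the most care is the lower-bound case split rather than any single estimate: when $f'(x_0)=0$ one cannot manufacture a two-sided bound from monotonicity and must instead let the squeeze from the upper bound do all the work, while when $f'(x_0)\ne 0$ one must first certify that $f$ is monotone on a whole neighbourhood before identifying the oscillation with the endpoint difference. The only other delicate point is expository: the identity as written holds at interior $x_0$, and a boundary point would contribute just a half-interval and hence an extra factor $\tfrac12$. If a genuinely signed statement were desired, one could replace $\texttt{VC}$ by an oriented variant (e.g.\ $f(x_0+\tfrac L2)-f(x_0-\tfrac L2)$, or the oscillation signed by the direction of increase), and the argument above carries over verbatim to give $\tfrac{df(x_0)}{dx}$ without the absolute value.
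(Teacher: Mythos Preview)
Your argument is correct and follows the same squeeze strategy as the paper: bound $\texttt{VC}_L(f,x_0)/L$ above by a local sup of $|f'|$ and below by a specific choice of $y_1,y_2$. The main difference is in the lower bound. The paper simply takes $y_1=x_0+\tfrac L2$, $y_2=x_0-\tfrac L2$ and uses the symmetric difference quotient $\bigl|f(x_0+\tfrac L2)-f(x_0-\tfrac L2)\bigr|\le\texttt{VC}_L(f,x_0)$, which works uniformly regardless of the sign of $f'(x_0)$; your monotonicity case split is valid but unnecessary, since one never needs the endpoint difference to \emph{equal} the oscillation, only to bound it from below. For the upper bound the paper uses a Taylor expansion where you use the mean value theorem, which is purely cosmetic. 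Your remark that the honest conclusion is $\bigl|\tfrac{df(x_0)}{dx}\bigr|$ is correct and in fact sharper than the paper's own statement, which silently drops the absolute value.
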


\begin{proof}
According to the definition of \texttt{VC} and the definition of the Taylor expansion, the conclusion regarding $f(x)\in\mathcal{C}^1$ can be  proven. As defined by \texttt{VC}, it is known that
\[\begin{aligned}
\frac{\texttt{VC}_L(f,x_0)-\texttt{VC}_0(f,x_0)}{L}
&=\frac{1}{L}\max_{y_1,y_2\in[x_0-\frac{L}{2},x_0+\frac{L}{2}]}|f(y_1)-f(y_2)|\\
&=\frac{1}{L}\max_{y_1,y_2\in[x_0-\frac{L}{2},x_0+\frac{L}{2}]}|f(x_0)-f(y_1)-(f(x_0)-f(y_2))|\\
&\leq\frac{2}{L}\max_{y\in[x_0-\frac{L}{2},x_0+\frac{L}{2}]}|f(x_0)-f(y)|.
\end{aligned}\]
Then, we have 
\[\begin{aligned}
\frac{\partial\texttt{VC}_L(f,x_0)}{\partial L}\Big|_{L^+,L=0}
&=\lim_{L\rightarrow0^+}\frac{\texttt{VC}_L(f,x_0)-\texttt{VC}_0(f,x_0)}{L} \leq\frac{2}{L}\max_{y\in[x_0-\frac{L}{2},x_0+\frac{L}{2}]}|f(x_0)-f(y)|.
\end{aligned}\]
Since $f$ is differentiable, it can be expanded as a Taylor series at the point $x_0$, i.e. $f(x)=f(x_0)+f'(x_0)(x-x_0)+o(x-x_0)$. Then we have
\[\begin{aligned}
\frac{\partial\texttt{VC}_L(f,x_0)}{\partial L}\Big|_{L^+,L=0} \leq\lim_{L\rightarrow0^+}\frac{2}{L}\max_{y\in[x_0-\frac{L}{2},x_0+\frac{L}{2}]}|f'(x_0)(y-x_0)+o(y-x_0)| \leq f'(x_0).
\end{aligned}\]
Meanwhile, 
\[\begin{aligned}
f'(x_0)&=\lim_{\Delta x\rightarrow0}\frac{f(x_0+\Delta x)-f(x_0-\Delta x)}{2\Delta x}\\
&\leq\lim_{\Delta x\rightarrow0}\frac{\texttt{VC}_{2\Delta x}(f,x_0)-\texttt{VC}_0(f,x_0)}{2\Delta x}=\frac{\partial\texttt{VC}_L(f,x_0)}{\partial L}\Big|_{L^+,L=0}.        
\end{aligned}\]
Combining the above two formulas, we have 
\(
f'(x_0)=\frac{\partial\texttt{VC}_L(f,x_0)}{\partial L}\Big|_{L^+,L=0}.
\)
\end{proof}
 
\begin{remark}
The \texttt{VC} of a function at a given point represents the maximum amplitude of the function within a fixed-radius neighborhood centered at that point. This value, which depends on a radius of $L/2$, captures the local characteristics of the function and may be related to the ease with which the function can be approximated by a neural network. For example, when $L$ is fixed, if a function exhibits large \texttt{VC} in certain regions and small \texttt{VC} in others, the regions with large \texttt{VC} may correspond to areas with rapid functional variation that are more difficult to approximate, whereas regions with small \texttt{VC} are relatively smoother and more amenable to approximation.
\end{remark}

{\begin{remark}
In some multi-scale problems, different dimensions may have different computational scales, so the definition of \texttt{VC} can be extended to the appropriate ${L}$ length for each dimension (${\bm{L}} = (L_1,L_2,\dots,L_n)$)
\begin{equation*}
\texttt{VC}_{\bm{L}}(f,{\bm x}) = \sup_{{\bm y}_1, {\bm y}_2 \in \Omega \cap \prod_{i=1}^{n} [{x}_i-\frac{L_i}{2}, {x}_i+\frac{L_i}{2}]} \ \vert f({\bm y}_1)-f({\bm y}_2)\vert. 
\end{equation*}
\end{remark}}

This definition captures the maximum change in the function \(f\) within the interval \(\Omega \cap \prod_{i=1}^{n} [x_i-L_i/2, x_i+L_i/2]\), providing a measure of the function's variability over this localized region of size(s) \(\bm{L}\). The \texttt{VC} is particularly useful in neural network approximation problems, where understanding how the objective function fluctuates in smaller intervals can inform the design of the network's architecture or its training strategy.

The \texttt{VC} reflects the variation of function values within an interval of length $\bm{L}$. When a function is smooth and differentiable, the {\ttfamily VC} value is directly proportional to the maximum derivative of the function over that interval. For instance, in the case of a fundamental one-dimensional linear function \( f(x) = x-1 \), the {\ttfamily VC} value is equal to $L$ for any given interval and $L$ value.

\begin{remark}
If  \(f\) is bounded on \(\prod_{i=1}^{n} [a_i, b_i]\), then \(\texttt{VC}_{\bm{L}}(f,\bm{x})\) is bounded.
\end{remark}

\begin{remark}
As a measure of a function's local complexity, the VC holds promise for a wide range of applications. In addition to its direct role in characterizing the rate of neural network approximation—illustrated in the example in Section \ref{motvation}—--it can also be utilized to enhance neural network algorithms. These enhancements include network preprocessing, the design of optimization algorithms, and the definition of loss functions. In Section \ref{sec5}, we introduce a novel norm derived from the VC. Based on this norm, we propose a new preprocessing method that leads to a significant reduction in both the initial $L^2$  error and the VC-based approximation error.
\end{remark}

\begin{remark}
The notion of VC used in this paper differs from the standard definition of the Vapnik–Chervonenkis (VC) dimension.
\end{remark}

While the \texttt{VC} provides a local measure of function variability, it is also essential to understand how these changes are distributed across the entire domain \(\prod_{i=1}^{n} [a_i, b_i]\). This leads to the concept of the \texttt{VC density}, which describes the distribution of different levels of change across the function.

{\color{black}\begin{definition}[\ttfamily VC Density]\label{LCDF}
A random variable $vc_L$ has density $F_{VC_L}(vc_L)$, where $F_{VC_L}(vc_L)$ is a non-negative Lebesgue-integrable function defined on $[0,+\infty)$, if
\(
{\rm Pr}[a\leq v_L\leq b]=\int_a^bF_{VC_L}(vc_L)dvc_L.
\) 
\end{definition}}

Fig.~\ref{VCD_eg} {shows} the \texttt{VC Density} functions of the piece-wise linear function (\ref{pw}). In the experiment, we take $L = 0.01$ and estimate the \texttt{VC Density} function using the Kernel Density Estimation (KDE) algorithm using the Gaussian kernel function\footnote{In the experiments below, this method will be used to estimate the \texttt{VC Density} function.}. 
Since $f_1$ and $f_2$ are the same when $x \leq 0$, their density functions have a peak at $\texttt{VC} = 0.04$. And $f_1$ is always equal to $0$ when $x > 0$, so the \texttt{VC Density} of $f_1$ has a peak at $\texttt{VC} = 0$. 

\begin{figure}[htb]
\centering
\includegraphics[width=0.9\textwidth,trim=40 0 40 0,clip]{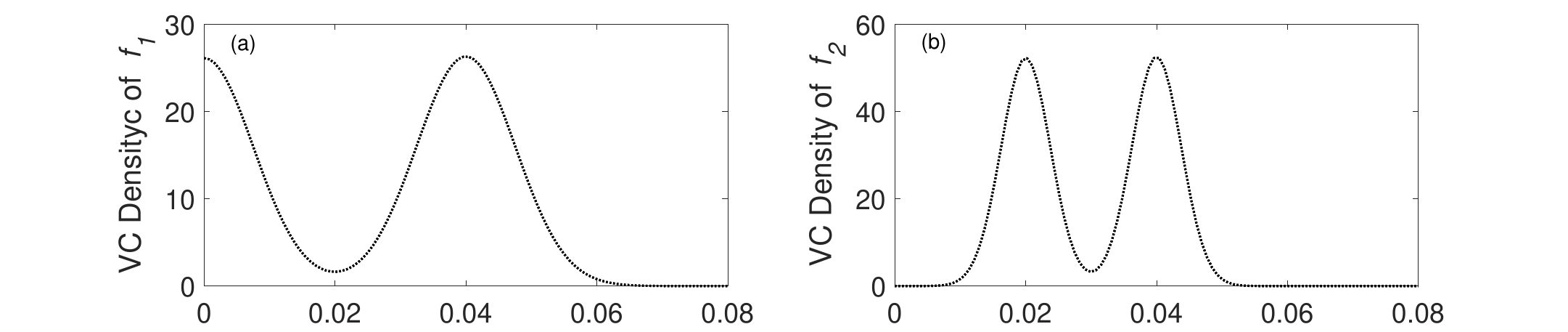} 
\caption{\color{black}{The \texttt{VC Density} function} of (a) the piece-wise function $f_1$, (b) the piece-wise function $f_2$ {in (\ref{pw})}.}
\label{VCD_eg}
\end{figure}

The {\ttfamily VC Density} function illustrates the distribution of {\ttfamily VC} values of functions. {\color{black}In the case of a smooth and differentiable function, the {\ttfamily VC} value reflects the distribution of function derivatives.} For example, considering a linear function \( f(x) = x-1 \), which satisfies $VC_L=L$, the {\ttfamily VC Density} function $F_{\texttt{VC}_{L}(f,{\bm x})}(\texttt{VC}_L=L)$ is 1, and it is 0 for all other {\ttfamily VC}  values.

The {\ttfamily VC Density} function $F_{\texttt{VC}_{L}(f,{\bm x})}(VC_L)$ gives the probability that the \texttt{VC} at a randomly chosen point \({\bm x}\) takes a specific {\texttt{VC} value. By studying this distribution, we can gain insights into the overall behavior of \(f\) over the interval, helping to identify regions where the function undergoes significant changes or remains relatively stable. Such knowledge can be applied to fine-tune neural network training, for example, by focusing more on regions where the function varies greatly.
As neural networks approximate the objective function, the \texttt{VC Density} of neural networks will also approximate the \texttt{VC Density} of the objective function. To facilitate the description of the approximation process of \texttt{VC Density}, we define the ratio between two {\ttfamily VC Density}.

\begin{definition}[\ttfamily VC Density Ratio]
The ratio between the {\ttfamily VC Density} of different functions \(f_1\) and \(f_2\) is defined as  
    \[\texttt{VCDR}_{{\bm x},L}(f_1,f_2)= 
\frac{{\mathcal P}_{\texttt{VC}_{L}(f_1,{\bm x})}}{{\mathcal P}_{\texttt{VC}_{L}(f_2,{\bm x})}}. 
\]
\end{definition}

If $f_1$ and $f_2$ are set to the neural networks function and the objective function, respectively, then the ratio of their \texttt{VC Densitys} is a more intuitive way to describe the approximation of \texttt{VC Densitys}. No matter the size of the \texttt{VC Density}, when the \texttt{VC Densitys} ratio at a point is closer to 1, the \texttt{VC Densitys} at that point can be considered to be better approximated. Of course, when the real \texttt{VC Density} approaches 0, the slight fluctuation of the \texttt{VC Density} of neural networks will lead to a drastic change in the \texttt{VC Density} ratio, and the \texttt{VC Density} ratio will have no reference significance in this case.

\subsection{Strong correlation between the loss function and the value change (\texttt{VC})}\label{sec3.2}

{In the motivation, we find that the approximation speed of the objective function is related to its slope. In different linear functions, the larger the slope, the slower the approximation. In the same function, the larger the slope, the slower the approximation. At the beginning of this section, we introduce the concept of \texttt{VC}, which can measure the local change of the function more comprehensively. Roughly speaking, {\ttfamily VC} can be used as an indicator to reflect the slope, so it is concluded that when the value of \(L\) is determined, the larger the {\ttfamily VC}, the slower the neural network approach. The impact of \texttt{VC} values under different $L$ values is not fully discussed. In this chapter, we will briefly explore the relationship between \(c\) under variable $L$ and neural network approximation.}

{In this section, we consider the same example as Ex. \ref{eg2}. In training, the neural networks are selected as a fully connected network with $5$ layers and $50$ elements per layer, and the optimization algorithm chooses the standard Adam algorithm with a learning rate of $0.01$.}
Fig.~\ref{VCloss}(a) and Fig.~\ref{VCloss}(b) show the trend of pointwise error in the process of neural networks approximating $f_0$ and $f_1$ {in (\ref{eg2})}, respectively. {Figs.~\ref{VCloss}(a) and (b)} show the total training results for the first $250$ steps. 

{In the objective functions $f_0$ and $f_1$, there are significant jump breaks when $x = 0$. Since the usual fully connected neural networks are continuous functions, there will always be large errors in neural networks near the jump points, which can be seen in the pointwise error graphs of Fig.~\ref{VCloss}(a) and Fig.~\ref{VCloss}(b). However, it can be seen in the \texttt{VC} change graphs of Fig.~\ref{VCloss}(c) and Fig.~\ref{VCloss}(d) that no matter how large $L$ is, the \texttt{VC} is relatively large in a small field of $x = 0$, which corresponds to the pointwise error Fig.~\ref{VCloss}(a) and Fig.~\ref{VCloss}(b). However, within the same objective function, it can also be seen that the larger  \texttt{VC} makes the function more difficult to approach. Fig.~\ref{VCloss}(a) shows that when approaching $f_0$, the error of neural networks at $x < 0$ is significantly smaller than the error at $x > 0$, which corresponds to the larger  \texttt{VC} on the left side in Fig.~\ref{VCloss}(c). Fig.\ref {VCloss}(b) also shows that the pointwise error on the left side of neural networks is significantly larger than that on the right side, which corresponds to the larger \texttt{VC} on the left side in Fig.~\ref{VCloss}(d).}

\begin{figure}[htbp]
\centering
\includegraphics[width = 1.0\textwidth, trim=100 0 80 0, clip]
{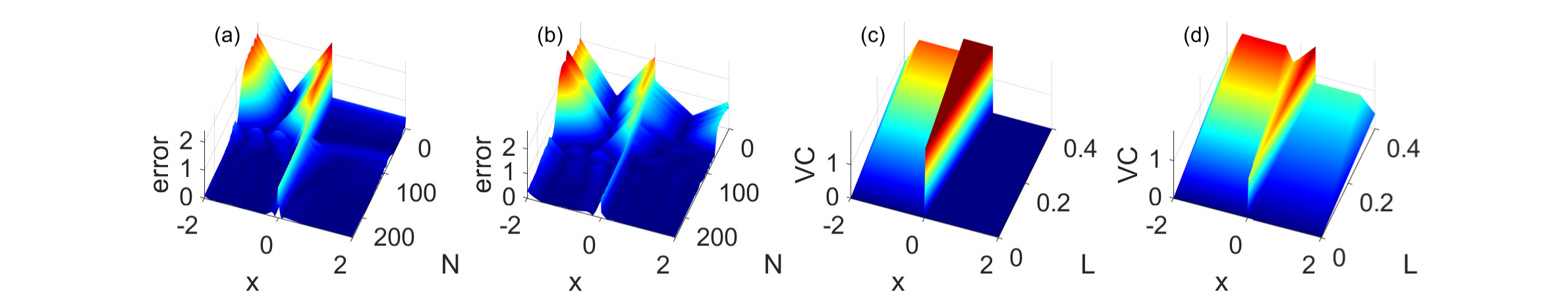} 
\caption{{\color{black} Correlation between point-wise errors during training and point-wise \texttt{VC} for different $L$.} (a) (b): pointwise error in Ex. \ref{eg2} (a) $f_0$, (b) $f_1$. (c) (d): pointwise \texttt{VC} in Ex. \ref{eg2}: (c) $f_0$, (d) $f_1$.}
\label{VCloss}
\end{figure}

\subsection{Theoretical properties of \texttt{VC}}

To better understand the behavior and characteristics of the \texttt{VC}, we now discuss several key theoretical properties. These properties are crucial in establishing the robustness and applicability of \texttt{VC} in various contexts, such as approximation, optimization, and analysis of neural networks. The \(\texttt{VC}_L(f) \) has the following properties. 

\begin{theorem}[Affine invariance with respect to \(f\)] 
Given a function \(f\) with \( \texttt{VC}_{L}(f, {{\bm x}})\) for \(\forall\ {{\bm x}}\), the \texttt{VC} of the function \(\kappa f\) on \({{\bm x}}\) is 
\(
\texttt{VC}_{L}(\kappa f+c, {{\bm x}})=\vert \kappa\vert \texttt{VC}_{L}(f, {\bm x}),\ \forall \ \kappa, c\in\mathbb{R}.
\) 
\end{theorem}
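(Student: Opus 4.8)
The plan is to work directly from Definition~\ref{VC}, exploiting the fact that the supremum of $|g(\bm y_1) - g(\bm y_2)|$ over a fixed domain behaves linearly under the transformation $g \mapsto \kappa g + c$. First I would fix an arbitrary point $\bm x$ and write $S = \Omega \cap \prod_{i=1}^n [x_i - L/2, x_i + L/2]$ for the localized box appearing in the definition, so that $\texttt{VC}_L(f,\bm x) = \sup_{\bm y_1, \bm y_2 \in S} |f(\bm y_1) - f(\bm y_2)|$ and $\texttt{VC}_L(\kappa f + c, \bm x) = \sup_{\bm y_1, \bm y_2 \in S} |(\kappa f(\bm y_1) + c) - (\kappa f(\bm y_2) + c)|$.

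The key observation is that the additive constant cancels: $(\kappa f(\bm y_1) + c) - (\kappa f(\bm y_2) + c) = \kappa(f(\bm y_1) - f(\bm y_2))$, so the integrand equals $|\kappa|\,|f(\bm y_1) - f(\bm y_2)|$ pointwise on $S \times S$. Then I would pull the nonnegative scalar $|\kappa|$ out of the supremum, using the elementary fact that $\sup_{t} |\kappa| h(t) = |\kappa| \sup_t h(t)$ for any nonnegative function $h$ and any real $\kappa$ (this holds trivially when $\kappa = 0$, since both sides are $0$). This yields $\texttt{VC}_L(\kappa f + c, \bm x) = |\kappa| \sup_{\bm y_1, \bm y_2 \in S} |f(\bm y_1) - f(\bm y_2)| = |\kappa|\,\texttt{VC}_L(f, \bm x)$, and since $\bm x$ was arbitrary the claim follows for all $\bm x$.

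There is no real obstacle here — the statement is essentially a restatement of two linearity/homogeneity properties of the supremum of absolute differences, and the proof is a two-line computation. The only point deserving a word of care is the case $\kappa = 0$ (where one should note $\texttt{VC}_L(c, \bm x) = 0 = |0|\cdot\texttt{VC}_L(f,\bm x)$ directly, so the identity still holds even if $\texttt{VC}_L(f,\bm x)$ were infinite), and making sure the localized domain $S$ is literally identical for both functions, which it is because the box $\prod_{i=1}^n[x_i - L/2, x_i + L/2]$ and the ambient set $\Omega$ do not depend on $f$. I would present the argument in three short displayed lines: substitute the definition, cancel $c$ and factor out $|\kappa|$, and recognize the remaining supremum as $\texttt{VC}_L(f,\bm x)$.
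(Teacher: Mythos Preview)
Your proposal is correct and follows essentially the same approach as the paper's own proof: substitute the definition, observe that the additive constant $c$ cancels, factor out $|\kappa|$ from the supremum, and recognize the remaining expression as $\texttt{VC}_L(f,\bm x)$. Your treatment is slightly more careful (explicitly noting the $\kappa=0$ case and that the domain $S$ is independent of $f$), but the argument is the same two-line computation.
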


\begin{proof}  
We begin the proof with the definition   
\[
\texttt{VC}_{L}(f, {\bm x}) = \sup_{{\bm y}_1, {\bm y}_2 \in \Omega \cap \prod_{i=1}^{n} [{x}_i-\frac{L}{2}, {x}_i+\frac{L}{2}]} \vert f({\bm y}_1) - f({\bm y}_2) \vert.   
\]  
For the affine function \(\kappa f + c\), we get  
\[
\texttt{VC}_{L}(\kappa f + c, {\bm x}) = \sup_{{\bm y}_1, {\bm y}_2 \in \Omega \cap \prod_{i=1}^{n} [{x}_i-\frac{L}{2}, {x}_i+\frac{L}{2}]} \left\vert \kappa f({\bm y}_1) + c - (\kappa f({\bm y}_2) + c) \right\vert. 
\]  
The constant term \(c\) cancels out, leading to  
\[
\texttt{VC}_{L}(\kappa f + c, {\bm x}) = \sup_{{\bm y}_1, {\bm y}_2 \in \Omega \cap \prod_{i=1}^{n} [{x}_i-\frac{L}{2}, {x}_i+\frac{L}{2}]} \vert \kappa \vert \cdot \vert f({\bm y}_1) - f({\bm y}_2) \vert.  
\]  
Since \(\vert \kappa \vert\) is constant, we have  
\[
\texttt{VC}_{L}(\kappa f + c, {\bm x}) = \vert \kappa \vert \cdot \texttt{VC}_{L}(f, {\bm x}).  
\]  
Thus,   
\(
\texttt{VC}_{L}(\kappa f + c, {\bm x}) = \vert \kappa \vert \texttt{VC}_{L}(f, {\bm x}), \quad \forall \, \kappa, c \in \mathbb{R},  
\)   
and the proof is complete. 
\end{proof}

This property ensures that the \texttt{VC} is invariant to affine transformations, making it a reliable measure when dealing with re-scaled or shifted functions.

\begin{theorem}[Uniqueness of \texttt{VC}] The value of \texttt{VC} of the function \(f\) at \({\bm x}\) is unique. 

\end{theorem}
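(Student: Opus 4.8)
The plan is to unwind the definition and observe that uniqueness is essentially automatic once we confirm that the supremum in Definition~\ref{VC} is well-defined. First I would fix an arbitrary point $\bm{x} \in \Omega$ and note that the set $S_{\bm{x}} := \Omega \cap \prod_{i=1}^{n}[x_i - L/2, x_i + L/2]$ is a fixed, nonempty subset of $\mathbb{R}^n$ (it contains $\bm{x}$ itself), depending only on $\bm{x}$, $L$, and $\Omega$. Consequently the numerical set $\{\,|f(\bm{y}_1) - f(\bm{y}_2)| : \bm{y}_1, \bm{y}_2 \in S_{\bm{x}}\,\} \subseteq [0, +\infty)$ is a fixed nonempty set of reals that is bounded above (this is where the earlier remark that $f$ bounded on $\Omega$ implies $\texttt{VC}_L(f,\bm{x})$ bounded is invoked; if one does not wish to assume boundedness a priori one may allow the value $+\infty$, and the argument is unchanged).

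Then I would appeal to the least–upper–bound (completeness) axiom of $\mathbb{R}$: every nonempty subset of $\mathbb{R}$ that is bounded above has a supremum, and that supremum is \emph{unique} — if $s$ and $s'$ are both least upper bounds of the same set, then $s \le s'$ (since $s$ is least and $s'$ is an upper bound) and $s' \le s$ symmetrically, whence $s = s'$. Since $\texttt{VC}_L(f,\bm{x})$ is by Definition~\ref{VC} precisely this supremum, its value is determined without ambiguity by the triple $(f, \bm{x}, L)$ together with the ambient domain $\Omega$. I would phrase the conclusion as: for each admissible $\bm{x}$ there is exactly one real number satisfying the defining property of $\texttt{VC}_L(f,\bm{x})$, namely being an upper bound for $\{|f(\bm{y}_1)-f(\bm{y}_2)|\}$ over $S_{\bm{x}}$ and being $\le$ every other such upper bound.

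Since there is no real obstacle here, I would keep the proof to two or three sentences; the only point deserving a word of care is making explicit that $S_{\bm{x}} \neq \varnothing$, so that the supremum is taken over a nonempty set and the $\sup$ is a genuine real number rather than $-\infty$, and that boundedness above is what was recorded in the preceding remark. The statement and its proof then read as follows.

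\begin{proof}
Fix $\bm{x} = (x_1,\dots,x_n)^\top \in \Omega$ and write $S_{\bm{x}} = \Omega \cap \prod_{i=1}^{n}[x_i - \tfrac{L}{2}, x_i + \tfrac{L}{2}]$. Since $\bm{x} \in S_{\bm{x}}$, the set $S_{\bm{x}}$ is nonempty, hence the real set $A_{\bm{x}} = \{\, |f(\bm{y}_1) - f(\bm{y}_2)| : \bm{y}_1, \bm{y}_2 \in S_{\bm{x}} \,\} \subseteq [0,+\infty)$ is nonempty; by the remark above it is bounded above whenever $f$ is bounded on $\Omega$. By the completeness of $\mathbb{R}$, $A_{\bm{x}}$ possesses a supremum, and this supremum is unique: if $s$ and $s'$ are both least upper bounds of $A_{\bm{x}}$, then $s \le s'$ because $s$ is a least upper bound and $s'$ is an upper bound, and symmetrically $s' \le s$, so $s = s'$. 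As $\texttt{VC}_L(f,\bm{x})$ is defined in Definition~\ref{VC} to be exactly $\sup A_{\bm{x}}$, its value is uniquely determined by $f$, $\bm{x}$, $L$, and $\Omega$.
\end{proof}
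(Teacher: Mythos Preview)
Your proof is correct and follows essentially the same approach as the paper: both reduce the claim to the uniqueness of the supremum of a fixed set of nonnegative reals. Your version is slightly more careful in making explicit that $S_{\bm{x}}$ is nonempty and that the set $A_{\bm{x}}$ is bounded above, but the core argument is identical.
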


\begin{proof}
We start with the definition of the \texttt{VC} of \(f\) at \({\bm x}\) over the corresponding interval.  
\[
\texttt{VC}_{L}(f, {\bm x}) = \sup_{{\bm y}_1, {\bm y}_2 \in \Omega \cap 
 \prod_{i=1}^{n} [{x}_i-\frac{L}{2}, {x}_i+\frac{L}{2}]} \vert f({\bm y}_1) - f({\bm y}_2) \vert.
\]
Assume for contradiction that two distinct values, \(\texttt{VC}_{L,1}(f, {\bm x})\) and \(\texttt{VC}_{L,2}(f, {\bm x})\), there exists 
\( 
\texttt{VC}_{L,1}(f, {\bm x}) \neq \texttt{VC}_{L,2}(f, {\bm x}).
\) 
Since both are supremums over the same interval, this contradicts the uniqueness of the supremum. 
This completes the proof.
\end{proof}

This ensures that the measure is well-defined and reliable for approximating the behavior of functions, without ambiguity in its calculation.

\begin{theorem}[Symmetric in-variance with respect to \({\bm x}\)] 
If a function \(f(\bm x)\) is symmetric with respect to \({\bm x}={\bm d}\in\mathbb{R}^n\), then 
\begin{equation}
\texttt{VC}_{L}(f, {\bm x})=\texttt{VC}_{L}(f, 2{\bm d} -{\bm x}).
\end{equation}
\end{theorem}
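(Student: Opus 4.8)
The plan is to use the point reflection $\phi(\bm y) = 2\bm d - \bm y$ about the centre $\bm d$, which is an involution on $\mathbb{R}^n$, and to verify that it identifies the clipped neighbourhood appearing in $\texttt{VC}_L(f,\bm x)$ with the one appearing in $\texttt{VC}_L(f, 2\bm d - \bm x)$. The symmetry hypothesis on $f$ I would read as $f(\bm y) = f(2\bm d - \bm y) = f(\phi(\bm y))$ for every $\bm y$ in the domain; I would also make explicit (as is implicit in the statement) that $\Omega$ is symmetric about $\bm d$, i.e. $\phi(\Omega) = \Omega$, which is precisely what makes the clipping by $\Omega$ compatible with the reflection.

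First I would check the elementary coordinatewise fact that $y_i \in [x_i - L/2, x_i + L/2]$ if and only if $2d_i - y_i \in [(2d_i - x_i) - L/2, (2d_i - x_i) + L/2]$. Consequently $\phi$ maps the box $\prod_{i=1}^n [x_i - L/2, x_i + L/2]$ bijectively onto the box $\prod_{i=1}^n [(2d_i - x_i) - L/2, (2d_i - x_i) + L/2]$ centred at $2\bm d - \bm x$. Intersecting with $\Omega$ and using $\phi(\Omega) = \Omega$, I obtain that $\phi$ restricts to a bijection from $S_{\bm x} := \Omega \cap \prod_i [x_i - L/2, x_i + L/2]$ onto $S_{2\bm d - \bm x} := \Omega \cap \prod_i [(2d_i - x_i) - L/2, (2d_i - x_i) + L/2]$.

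Next I would substitute. By definition $\texttt{VC}_L(f, \bm x) = \sup_{\bm y_1, \bm y_2 \in S_{\bm x}} |f(\bm y_1) - f(\bm y_2)|$; replacing $\bm y_1, \bm y_2$ by $\phi(\bm z_1), \phi(\bm z_2)$ with $\bm z_1, \bm z_2$ ranging over $S_{2\bm d - \bm x}$ (legitimate since $\phi$ restricts to a bijection $S_{2\bm d - \bm x} \to S_{\bm x}$), and invoking $f \circ \phi = f$, the argument of the supremum becomes $|f(\bm z_1) - f(\bm z_2)|$. Hence $\texttt{VC}_L(f, \bm x) = \sup_{\bm z_1, \bm z_2 \in S_{2\bm d - \bm x}} |f(\bm z_1) - f(\bm z_2)| = \texttt{VC}_L(f, 2\bm d - \bm x)$, which is the claim. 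The one step deserving a line of justification is that a supremum over a set of reals is unchanged when its index set is reparametrised by a bijection.

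The argument is essentially bookkeeping and I expect no serious obstacle. The only point I would be careful to flag is the symmetry of $\Omega$ about $\bm d$: without it the two clipped neighbourhoods need not correspond under $\phi$, and the identity can genuinely fail near the boundary of $\Omega$; if the authors intend $\Omega$ to be fixed and possibly asymmetric, the statement should be understood as holding for $\bm x$ such that both neighbourhoods lie in the interior of $\Omega$, or with $\Omega$ replaced by all of $\mathbb{R}^n$.
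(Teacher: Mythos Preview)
Your argument is correct and follows essentially the same reflection-based approach as the paper: map the neighbourhood of $\bm x$ to that of $2\bm d-\bm x$ via $\bm y\mapsto 2\bm d-\bm y$ and use $f(\bm y)=f(2\bm d-\bm y)$ to identify the two suprema. Your version is in fact more careful than the paper's, which does not explicitly address the symmetry of $\Omega$ about $\bm d$; your flag that without $\phi(\Omega)=\Omega$ the clipped neighbourhoods need not correspond is a genuine point the paper leaves implicit.
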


\begin{proof}
We start by assuming that the function \(f(\bm{x})\) is symmetric about \({\bm x} = {\bm d}\). This means that for any \({\bm y} \in \mathbb{R}^n\), the following holds:
\(
f({\bm y}) = f(2{\bm d}  - {\bm y}).
\) 
Next, we consider the \texttt{VC} at a point \({\bm x}\), which is given by 
\[
\texttt{VC}_{L}(f, {\bm x}) = \sup_{{\bm y}_1, {\bm y}_2 \in \Omega \cap \prod_{i=1}^{n} [{x}_i-\frac{L}{2}, {x}_i+\frac{L}{2}]} \vert f({\bm y}_1) - f({\bm y}_2) \vert.
\]
Since the function is symmetric about \({\bm x} = {\bm d}\), we can substitute \({\bm y}_1\) and \({\bm y}_2\) using their symmetric counterparts \(2{\bm d}  - {\bm y}_1\) and \(2{\bm d}  - {\bm y}_2\). Thus, for \({\bm x}' = 2{\bm d}  - {\bm x}\), we have 
\[
\texttt{VC}_{L}(f, {\bm x}') = \sup_{{\bm y}'_1, {\bm y}'_2 \in \Omega \cap  \prod_{i=1}^{n} [{x}'_i-\frac{L}{2}, {x}'_i+\frac{L}{2}]} \vert f({\bm y}_1') - f({\bm y}_2') \vert.
\]
By the symmetry property of \(f\), we know that 
\(
f({\bm y}_1') = f(2{\bm d}  - {\bm y}_1')\)  and \(f({\bm y}_2') = f(2{\bm d}  - {\bm y}_2')\). 
Hence, the supremum over the interval \([2{\bm d}  - {\bm x} - L/2, 2{\bm d}  - {\bm x} + L/2]\) will result in the same value as the supremum over the corresponding region since the function behaves identically in both intervals due to the symmetry. Thus, we conclude
\(
\texttt{VC}_{L}(f, {\bm x}) = \texttt{VC}_{L}(f, 2{\bm d}  - {\bm x}).
\) 
This proves the theorem.
\end{proof}

This symmetry property is particularly useful when analyzing functions that exhibit symmetric behavior, as it allows us to reduce the complexity of calculations by exploiting the inherent symmetry of the function.

\section{VC-related tendencies in neural network approximation}\label{sec4}

{\color{black} In this section, we discuss two neural network approximation tendencies related to \texttt{VC}: the \texttt{VC} tendency and the minority tendency. These tendencies respectively characterize the direct relationship between neural network approximation and \texttt{VC} values, as well as the evolving distribution of \texttt{VC} during the training process. We begin by presenting several fundamental yet illustrative experiments to validate these relationships and trends. Additional examples will be made available on our website \footnote{\href{https://github.com/pxie98/L-Change/tree/main/VC}{\ttfamily https://github.com/pxie98/L-Change/tree/main/VC}} and explored further in future work.}

\subsection{\texttt{VC}-tendency phenomenon}

The first key phenomenon is the \texttt{VC}-tendency phenomenon, which illustrates that the approximation process of neural networks is highly correlated with the \texttt{VC} value of the objective function. We employ two experiments to demonstrate the \texttt{VC}-tendency phenomenon: the approximation of a monochrome image and the simulation of flow past a cylinder.

\subsubsection{Approximation to a monochrome picture}\label{mono}

\begin{figure}[htb]
\centering
\includegraphics[width=0.9\textwidth,trim=0 0 0 0,clip]{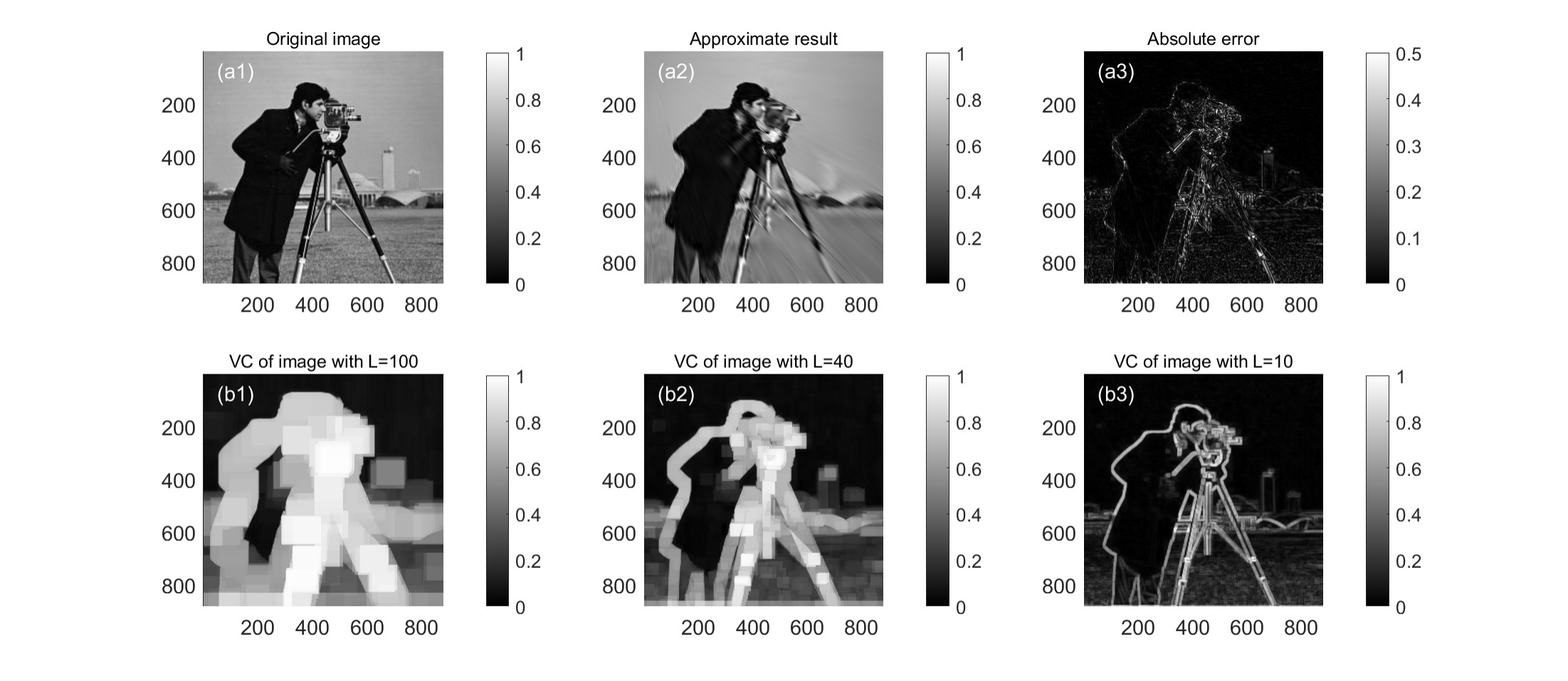}   
\caption{{\color{black} Direct approximation of a monochrome image and \texttt{VC} with different diameters}.
(a1) Original monochrome image. (a2) Neural network approximation result.
(a3) Absolute error between the approximation and the original. (b1)–(b3) \texttt{VC} of the original image computed with different neighborhood diameters $L$: (b1) $L=100$ pixels, (b2) $L=40$ pixels, (b3) $L=10$ pixels.}
\label{fig:picture}
\end{figure}

\begin{figure}[htb]
\centering
\includegraphics[width=1.0\textwidth,trim=0 0 0 0,clip]{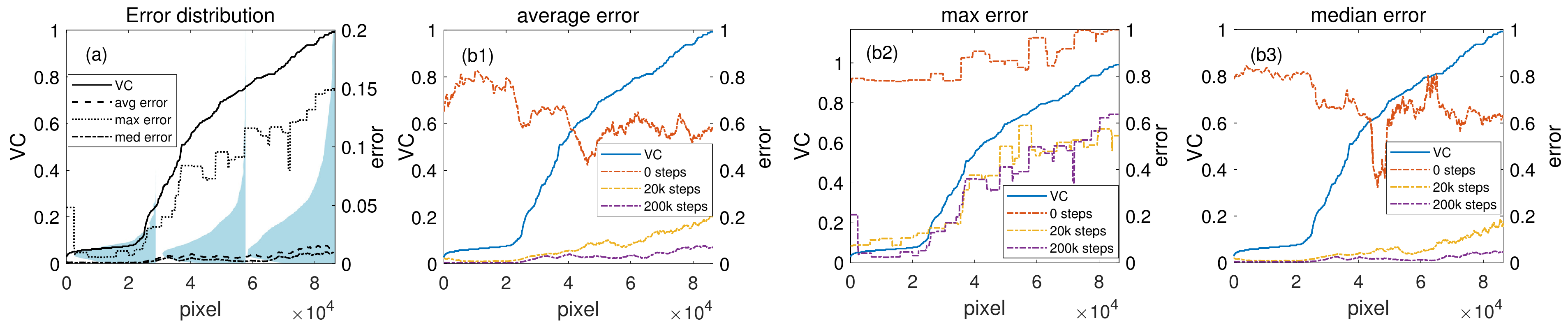}   
\caption{{\color{black} \texttt{VC}-tendency phenomenon in monochrome image approximation.} (a) Point-by-point error curve sorted by \texttt{VC} value, and point error size distribution of different \texttt{VC} distribution pixels. (b1)-(b3) Correlation of error curve with \texttt{VC} for three different smoothing methods.}
\label{fig:VCvsLoss}
\end{figure}

The first experiment involves the approximation of a monochrome image. Here, the neural network takes two inputs corresponding to the spatial coordinates of each pixel. The output is the grayscale intensity, ranging from $[0,1]$, where an output of 1 represents pure black and $0$ represents pure white. A fully connected neural network with five layers, each containing 300 neurons, is trained using the Adam optimizer for 200,000 steps. The approximation results and corresponding error comparison are in Fig.~\ref{fig:picture} (a),(b) and (c). Notably, the white regions in Fig.~\ref{fig:picture}(c), which indicate pixels with high approximation error at the end of training, resemble the contours of the original image. Fig.~\ref{fig:picture}(b1), (b2), and (b3) depict the \texttt{VC} distribution of the objective function under different values of $L$. As $L$ approaches zero, the \texttt{VC} distribution increasingly resembles the contours of the original function. This implies that these high-error regions correspond to areas of high \texttt{VC} value in the objective image.

To illustrate the correlation between \texttt{VC} and prediction error, we first sort all $880 \times 880$ data points in the graph based on the \texttt{VC} value computed within a local field of radius 50 pixels ($L=101$). The corresponding error (after training) values are then plotted in the same order as the sorted \texttt{VC} values. Fig.~\ref{fig:VCvsLoss}(a) presents the error curves under three different smoothing methods. Here, the average, maximum, and median errors are shown (denoted as Avg error, Max error, and Med error, respectively), using a smoothing radius of 2,000 pixels. Additionally, the three flipped blue areas indicate the error distributions when the data points are divided into three groups according to their \texttt{VC} magnitudes. Both the overall trend and the distribution plots suggest that data points with larger \texttt{VC} values tend to exhibit higher prediction errors.

Fig.~\ref{fig:VCvsLoss}(b1)–(b3) further illustrates the evolution of error with training. These subfigures show the sorted errors (by \texttt{VC} value) at three stages: the initial state, after 20,000 training steps, and after 200,000 steps, using the same three smoothing strategies. While the initial error distribution shows no clear dependency on \texttt{VC}, it becomes evident over time that regions with higher \texttt{VC} values consistently retain larger errors as training progresses.

\begin{sloppypar}
{\color{black} The example of approximating monochrome images demonstrates that when using a two-dimensional function to approximate data, the \texttt{VC} value of the objective function is positively correlated with the final approximation error. In other words, there appears to be a potential ``\texttt{VC} tendency principle'', which we will refer to as the VC tendency phenomenon. This phenomenon may exist in many application domains. In the next subsection, we further investigate this phenomenon using the example of 2-dimensional flow past a cylinder.}
\end{sloppypar}

\subsubsection{Approximation to the flow past a cylinder}\label{cylinder}

In this section, we consider the approximation of the flow around a cylinder in two-dimensional incompressible fluid using neural networks. 
{\color{black}The flow past a cylinder is a canonical problem in fluid dynamics, widely studied due to its fundamental significance and broad applicability in engineering and environmental contexts.} 
Understanding flow over bluff bodies like circular cylinders is important for designing structures affected by fluid forces, such as bridge piers, offshore platforms, heat exchangers, and aircraft parts. The flow is complex due to phenomena like boundary layer separation, vortex shedding, transition to turbulence, and wake formation, making it both theoretically challenging and practically important.

Numerical simulation has become an indispensable tool in investigating flow past a cylinder, offering insights into flow structures, force coefficients, and transition mechanisms that are difficult to obtain through experiments alone. High-fidelity approaches such as Direct Numerical Simulation (DNS) \cite{moin1998direct}, Large Eddy Simulation (LES) \cite{mason1994large}, and Reynolds-Averaged Navier-Stokes (RANS) methods \cite{alfonsi2009reynolds} enable researchers to analyze the influence of Reynolds number, surface roughness, and free-stream turbulence on flow behavior. 
In recent years, deep learning has emerged as a powerful tool in the field of computational fluid dynamics (CFD), offering new opportunities to accelerate simulations, enhance prediction accuracy, and reduce computational costs. Traditional CFD methods, based on solving the Navier–Stokes equations numerically, often require significant time and computational resources, especially for complex or high-fidelity simulations. Deep learning techniques, particularly convolutional neural networks (CNNs) \cite{guo2016convolutional}, recurrent neural networks (RNNs), and physics-informed neural networks (PINNs) \cite{wang2025simulating}, have been increasingly adopted to overcome these limitations. These models can learn complex flow patterns from data, approximate solutions to governing equations, or serve as efficient surrogates for conventional solvers. Applications of deep learning in CFD now span a wide range of areas, including turbulence modeling, aerodynamic design optimization, real-time flow prediction, and uncertainty quantification.

Typically, the motion of a two-dimensional incompressible fluid can be described by the incompressible Navier-Stokes (NS) equations 
\begin{equation}\label{NS}
\left\{
\begin{aligned}
&\frac{\partial \mathbf{u}}{\partial t} + (\mathbf{u}\cdot\nabla)\mathbf{u} = -\nabla p + \nu \Delta \mathbf{u},\\
&\nabla\cdot\mathbf{u} = 0, 
\end{aligned}
\right.
\end{equation}
where ${\bf u}$ is the fluid velocity, $\nabla p$ is the pressure gradient and $\nu$ is the kinematic viscosity.

In the numerical experiments, a neural network is employed to approximate the evolution of flow past a cylinder within a square domain. The flow data are obtained from direct numerical simulations as reported in \cite{trebotich2015adaptive}. In the setup, the domain has a length of $l = 16$ and a width of $w = 8$, with a cylinder located at $x = 1$ and a radius of $0.125$. The kinematic viscosity is set to $\nu = 0.0004167$, corresponding to a Reynolds number of $300$.

Similar to the analysis conducted for monochrome image approximation, we examine the relationship between the \texttt{VC} of the objective function and the approximation error. Our focus is on the correlation between the 3-dimensional \texttt{VC} and the error, where the 3-dimensional \texttt{VC} quantifies local variations around each point in the XYT space. Fig.~\ref{fig:VCvsLoss_cyl_3d}(a) reveals a strong correlation between the 3-dimensional \texttt{VC} and the approximation error, suggesting that regions with higher \texttt{VC} values tend to exhibit greater errors and are more challenging for the neural network to approximate accurately.
Furthermore, by sorting all residual points according to their \texttt{VC} values, we observe distinct error distributions across different \texttt{VC} ranges. Points with larger \texttt{VC} values are more likely to have higher errors; however, the majority of residuals—regardless of \texttt{VC}—remain within the low-error range. Figs.~\ref{fig:VCvsLoss_cyl_3d}(b), (c), and (d) respectively depict the relationship between three types of local error and the \texttt{VC} of the objective function at different training stages, providing numerical evidence of the consistently strong correlation between \texttt{VC} and error throughout the training process.

\begin{figure}
    \centering
    \includegraphics[width=1.0\linewidth]{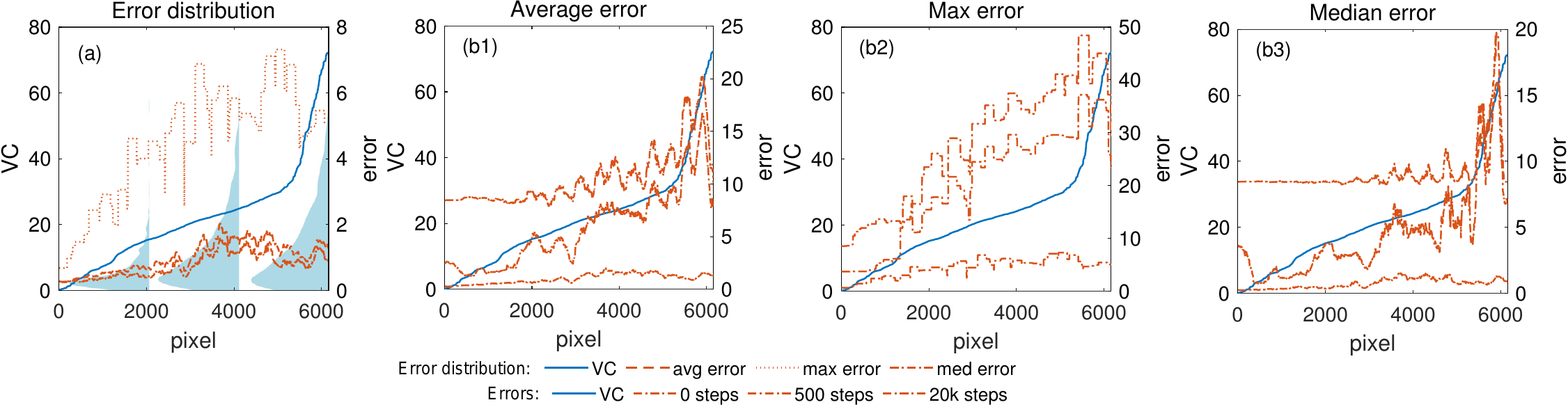}
    \caption{{\color{black} 3-dimensional \texttt{VC}-tendency phenomenon in the flow past a cylinder.} (a): the trends of three types of errors (average, maximum, and median) for pixel points sorted according to 3-dimensional  \texttt{VC} ordering in the flow past a cylinder, and the flipped blue area represent the error distribution of three intervals divided by \texttt{VC}. (b1)-(b3): changes in three types of errors throughout training after the pixel points are sorted by 3-dimensional  \texttt{VC} ordering.}
    \label{fig:VCvsLoss_cyl_3d}
\end{figure}

The above observations indicate that when considering the 3-dimensional  \texttt{VC}, the \texttt{VC} value of the objective function remains positively correlated with the final approximation error, further validating the VC tendency phenomenon. Results for higher-dimensional cases will be presented in future experiments. 
In the following, we investigate whether neural networks trained on high-dimensional systems still exhibit the aforementioned properties when observed from a lower-dimensional perspective.
Therefore, we are particularly interested in examining the relationship between \texttt{VC} and approximation error within the spatial domain (XY), and in assessing how this relationship changes as time progresses.We refer to the (XY)-domain \texttt{VC} as the reduced-order \texttt{VC}, as it captures variations only within the spatial domain.

\begin{figure}[htb]
\centering
\includegraphics[width=1.0\textwidth,trim=110 0 110 0,clip]{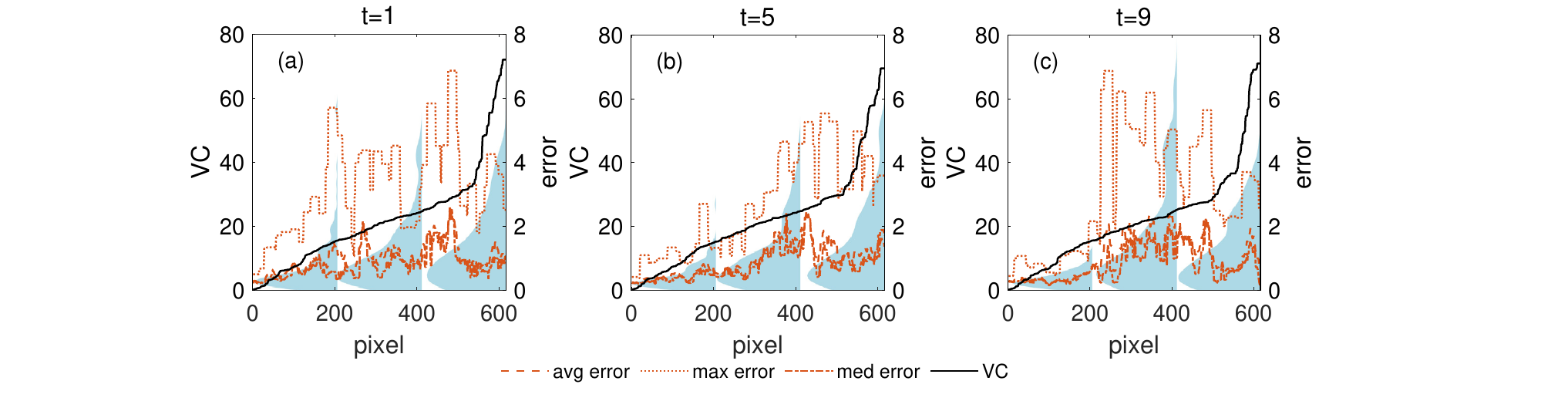}    
\caption{Error distribution (after training) sorted by reduced order \texttt{VC} in 2-dimensional flow past a cylinder.
(a)(b)(c) respectively present the correlation between three types of errors observed after training and the reduced-order \texttt{VC} at three different time ($t=1,5,9$) instances in approximation of flow past a cylinder. (All pixels are ranked according to their corresponding reduced order \texttt{VC}, with the reduced order \texttt{VC} calculation radius set to 2 pixels).}
\label{fig:VCvsLoss_cyl_2d1}
\end{figure}

{\color{black} Figs~\ref{fig:VCvsLoss_cyl_2d1} illustrates the relationship between the reduced-order 2-dimensional \texttt{VC} and the pointwise error in 2-dimensional flow past a cylinder. It can be observed that, regardless of the time instance, once the data points are sorted by the reduced-order \texttt{VC}, all three types of smoothed error consistently exhibit a positive correlation with the \texttt{VC}.}

\begin{figure}[htb]
\centering
\includegraphics[width=0.9\textwidth,trim=0 0 0 0,clip]{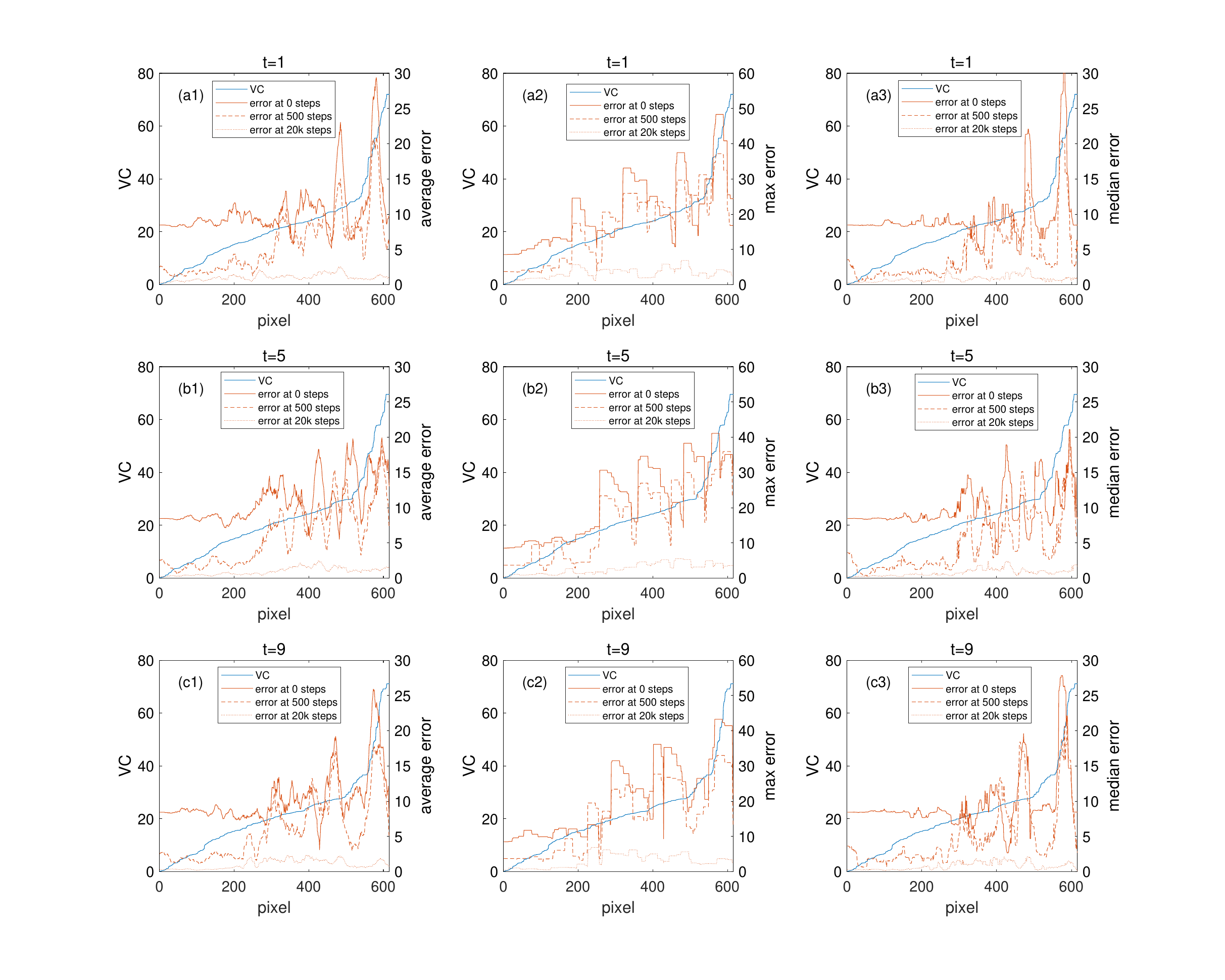}   
\caption{{\color{black} Reduced order 2-dimensional \texttt{VC}-tendency phenomenon in the 2-dimensional flow past a cylinder.} Figs. (a1)-(a3), (b1)-(b3), (c1)-(c3) respectively illustrate the correlation between three types of errors and reduced order \texttt{VC} at different training steps during flow past a cylinder approximation, captured at three time instants ($t=1, 5, 9$).}
\label{fig:VCvsLoss_cyl_2d2}
\end{figure}

\subsection{Minority-tendency phenomenon}

In the previous section, we investigated the relationship between \texttt{VC} and neural networks in the context of approximating images and physical phenomena. The results indicate that regions with smaller \texttt{VC} values exhibit faster approximation rates. \color{black}{For instance, in a previous low-Reynolds-index cylinder wake flow approximation experiment, the majority of pixel-wise \texttt{VC} was relatively small (at least $3/4$ of the pixels had \texttt{VC} less than $\max(\texttt{VC})/2$). The results indicated that regions with smaller \texttt{VC} values could be approximated more rapidly. However, in practical applications, regions with large and small \texttt{VC} values are often treated as a single integrated domain. Under such circumstances, relying solely on error curves sorted by \texttt{VC} does not reveal which subset of the domain can be approximated more efficiently. }

In this section, we study how the \texttt{VC} distribution evolves during the approximation process from a distributional perspective. Furthermore, we introduce the concept of \texttt{VC} Density (Definition \ref{LCDF}) to characterize the distribution of \texttt{VC}.

\begin{figure}
    \centering
    \includegraphics[width=0.85\linewidth]{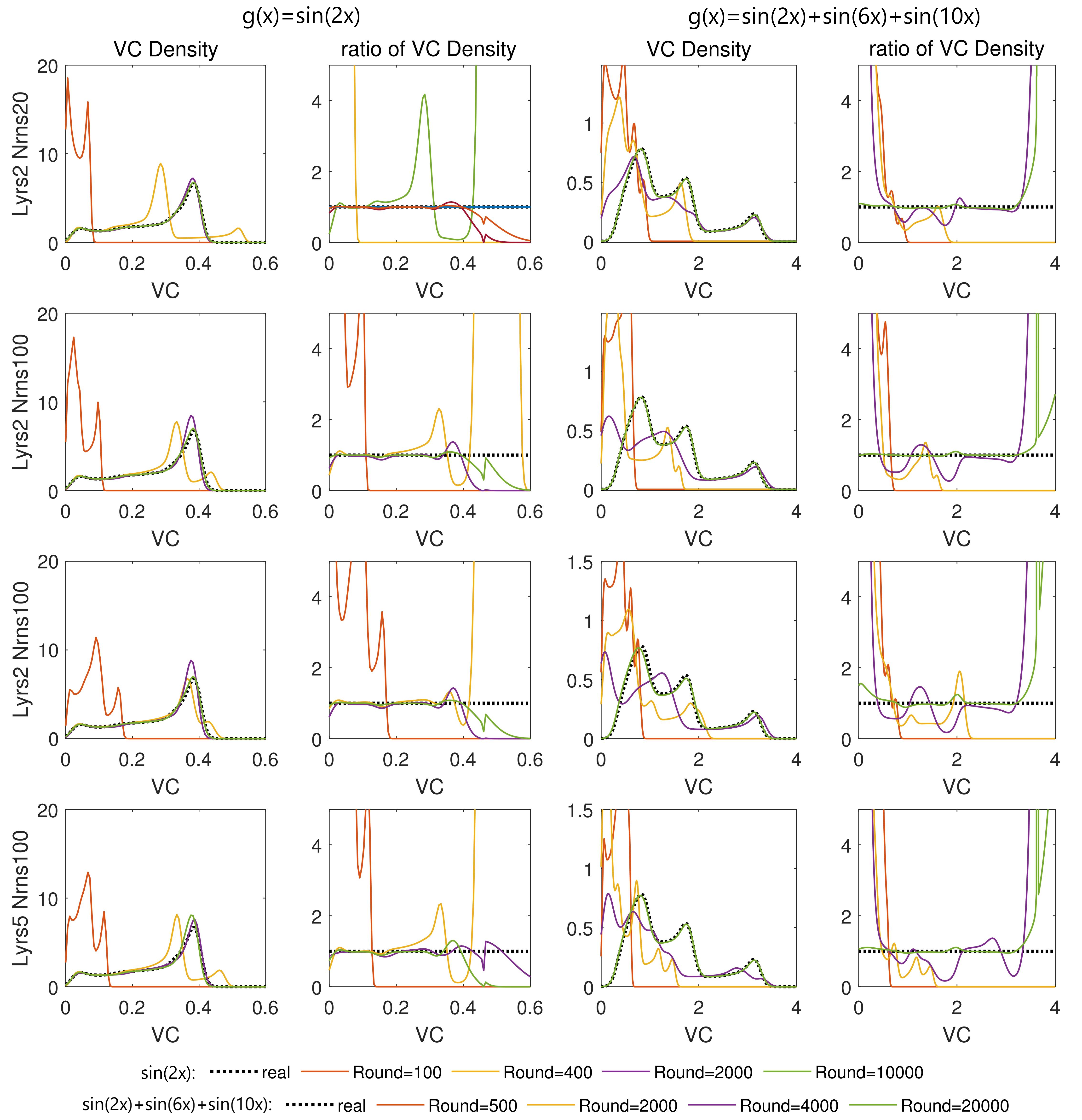}
    \caption{The \texttt{VC Density} evolution of \texttt{VC} when the neural networks approximate  $\sin(2x)$ (left two columns) and  $\sin(2x)+\sin(6x)+\sin(10x)$ (right two columns) by the algorithm \texttt{Adam} with a learning rate of $10^{-2}$.}
    \label{VC Density2}
\end{figure}

A natural query is where the \texttt{VC} of the function being approximated will be approximated sooner. In this section, we examine the evolution of \texttt{VC Density} during the approximation of neural networks with a fixed radius ($L=0.2$). To demonstrate the universality of the phenomenon, three representative optimization algorithms are chosen for the experiment, namely \texttt{Adam} (with initial learning rates of $10^{-2}$ and $10^{-3}$), \texttt{SGD} (with an initial learning rate of $10^{-2}$), and \texttt{LBFGS}.
The numerical results of \texttt{Adam} is shown in the manuscript and the related animations are posted online at an online repository\footnote{\href{https://pxie98.github.io/Minority-first.html}{https://pxie98.github.io/Minority-first.html}, also in \href{https://github.com/pxie98/L-Change/tree/main/VC}{\ttfamily https://github.com/pxie98/L-Change/tree/main/VC}.}.  Considering the representativeness of \texttt{VC Density}, the objective functions are determined as $\sin(2x)$  and $\sin(2x)+\sin(6x)+\sin(10x)$. 

{\color{black}The 1st and 3rd columns of Fig.~\ref{VC Density2}  show the \texttt{VC Density} evolution results when the neural {network} approximates $\sin(2x)$ and $\sin(2x)+\sin(6x)+\sin(10x)$ via the algorithm \texttt{Adam}  with the learning rate as $10^{-2}$. {\color{black}The 2nd and 4th columns of Figs.~\ref{VC Density2} show the ratio of \texttt{VC Density}, \(\texttt{VCDR}_{{\bm x},L}(\psi_{\rm NN},f)\). It is easy to see that the \texttt{VC} scale curve rises on the left and right sides. The reason why the ratio rises on the left side is that in the initial state, the neural networks function approaches $0$, so the density function approaches $0$, and the value is larger, which leads to a larger scale. The larger proportion on the right column is because the \texttt{VC} density of the 
objective function tends to $0$ on the right, so the proportion is larger in the case of non-zero neural networks

The convergence of the neural network to the \texttt{VC density} of the objective function is equivalent to the \texttt{VC Density Ratio}'s convergence to 1. It can be seen in the left two columns of {\color{black}Fig.~\ref{VC Density2}} that the \texttt{VC Density} referring to a small value of \texttt{VC} is almost already well approximated at ${\rm Round}=400$. However, even when ${\rm Round} >2,000$, \texttt{VC Density} referring to a large value of \texttt{VC} still exhibits a large error (also true for the ratio curve in the right-hand side). This phenomenon also exists in the approximation of $\sin(2x)+\sin(6x)+\sin(10x)$. Most of the examples in Fig.~\ref{VC Density2} show that when $\texttt{VC}>2.1$ and ${\rm Round}=4000$, \texttt{VC Density} has been basically learned. However, for $\texttt{VC}<2.1$, ${\rm Round}=4000$ is far from enough for \texttt{VC Density} to approximate the \texttt{VC Density} of the objective function.

{\color{black}Table~\ref{tab1} and Table~\ref{tab2}} exhibit the \texttt{VC Density Ratio} of neural network approximation for $\sin(2x)$ and $\sin(2x)+\sin(6x)+\sin(10x)$, respectively. Table~\ref{tab1} shows that in four different neural network parameter settings, the \texttt{VC Density} at \texttt{VC}$=0.08$ and \texttt{VC}$=0.18$ is always significantly less than the \texttt{VC Density} at \texttt{VC}$=0.28$ and \texttt{VC}$=0.38$ when Round$=400$.  
The results demonstrate that, in this case, regions with smaller \texttt{VC} values exhibit earlier convergence in \texttt{VC density}. 
Table~\ref{tab2} shows different results. {The} \texttt{VC Density} at \texttt{VC}$=0.8$ and \texttt{VC}$=1.6$ is always significantly more than the \texttt{VC Density} at \texttt{VC}$=2.4$ and \texttt{VC}$=3.2$ when Round$=4,000$. In this example, \texttt{VC Density} first converges when the variable \texttt{VC} is large.  
\color{black}{In this case, regions with larger \texttt{VC} exhibit faster convergence in \texttt{VC density}. These two seemingly contradictory examples suggest a potential underlying principle governing neural network convergence: during approximation, neural networks may converge more rapidly at regions with lower \texttt{VC density}—a phenomenon we tentatively term the {\it minority-tendency phenomenon}. 
}

\begin{footnotesize}
\begin{table}[htbp]
\centering
\setlength{\tabcolsep}{3pt}  
\begin{minipage}[t]{0.47\textwidth}
\caption{{\ttfamily VC density ratio} for $\sin(2x)$ (trained with \texttt{Adam}, lr = $10^{-2}$)\label{tab1}} \footnotesize
\begin{tabular}{@{}lcccc@{}} 
\toprule
VC & 0.08 & 0.18 & 0.28  & 0.38  \\
VCDR & 1.35 & 1.71 & 2.09 & 6.60 \\
\midrule
\textbf{L: 2 N: 20} &&&&\\
Round=100   & 2.02  & 0 & 0 & 0 \\
Round=400   & \underline{0.97} & 1.14 & 3.98 & 0.079 \\
Round=2,000  & 0.99 & 0.98 & \underline{1.00} & 1.10 \\
Round=10,000 & \underline{1.00} & 0.99 & 0.99 & 1.02 \\
\midrule
\textbf{L: 2, N: 100} &&&&\\
Round=100   & 3.72 & 0 & 0 & 0 \\
Round=400   & \underline{0.98} & 1.03 & 1.26 & 0.152 \\
Round=2,000  & 0.97 & 0.93 & \underline{1.02} & 1.28 \\
Round=10,000 & \underline{1.00} & 0.99 & 0.99 & 1.06 \\
\midrule
\textbf{L: 5, N: 20} &&&&\\
Round=100   & 6.51 & 0.082 & 0 & 0 \\
Round=400   & \underline{1.02} & 0.93 & 1.10 & 0.78 \\
Round=2,000  & 0.97 & 0.92 & \underline{1.01} & 1.33 \\
Round=10,000 & \underline{1.00} & 0.98 & 0.99 & 1.06 \\
\midrule
\textbf{L: 5, N: 100} &&&&\\
Round=100   & 6.11 & 0 & 0 & 0 \\
Round=400   & \underline{0.99} & 1.02 & 1.31 & 0.117 \\
Round=2,000  & \underline{0.98} & 0.86 & 1.06 & 1.10 \\
Round=10,000 & 0.98 & 0.94 & \underline{0.99} & 1.23 \\
\bottomrule
\end{tabular}
\end{minipage}
\hfill
\begin{minipage}[t]{0.52\textwidth}
\caption{{\ttfamily VC density ratio} for $\sin(2x)+\sin(6x)+\sin(10x)$ (trained with \texttt{Adam}, lr = $10^{-2}$)\label{tab2}} \footnotesize 
\begin{tabular}{@{}lcccc@{}} 
\toprule
VC & 0.8 & 1.6 & 2.4 & 3.2  \\
VCDR & 0.78 & 0.47 & 0.09 & 0.20 \\
\midrule
\textbf{L: 2, N: 20} &&&&\\
Round=500   & 0.53 & 0 & 0 & 0 \\
Round=2,000  & 0.72 & 1.00 & 0 & 0 \\
Round=4,000  & 0.80 & 0.70 & \underline{0.96} & 0.91 \\
Round=20,000 & \underline{1.00} & 0.97 & 0.98 & 1.03 \\
\midrule
\textbf{L: 2, N: 100} &&&&\\
Round=500   & $5.7e{-8}$ & 0 & 0 & 0 \\
Round=2,000  & 0.32 & 0.43 & 0 & 0 \\
Round=4,000  & 0.52 & 0.71 & \underline{0.93} & 0.91 \\
Round=20,000 & 1.00 & 0.97 & \underline{1.00} & 1.00 \\
\midrule
\textbf{L: 5, N: 20} &&&&\\
Round=500   & 0.86 & 0 & 0 & 0 \\
Round=2,000  & 0.49 & 0.42 & $7.6e{-7}$ & 0 \\
Round=4,000  & 0.56 & 0.45 & 0.91 & \underline{0.95} \\
Round=20000 & 0.99 & 0.96 & \underline{0.99} & 0.99 \\
\midrule
\textbf{L: 5, N: 100} &&&&\\
Round=500   & 0 & 0 & 0 & 0 \\
Round=2,000  & 0.79 & 0.02 & 0 & 0 \\
Round=4,000  & 0.73 & 0.31 & \underline{1.12} & 0.33 \\
Round=20,000 & 1.00 & 0.97 & \underline{0.99} & 1.01 \\
\bottomrule
\end{tabular}
\end{minipage}
\end{table}
\end{footnotesize}

We can observe that lower-proportion \texttt{VC}s are given higher tendency when using neural networks for approximation. This result is not immediately {obtained} and differs from the intuitive understanding of the approximation process in traditional numerical computation. We refer to this phenomenon as the minority-tendency phenomenon in neural network approximation. {This discovery gives us a deeper understanding of how neural networks approach objective functions and may help us design more efficient deep learning algorithms.}

\section{An improved pre-processed neural network approximation algorithm}\label{sec5}

In this section, we first design a heuristic experiment, and based on its results, we propose a novel metric grounded in \texttt{VC} theory to quantify the variation between the objective function and the neural network within a local neighborhood. Building on this metric, we further introduce a novel preprocessing method for neural networks.

\subsection{Observation}

To inform the design of the pre-processing algorithm, we first adopt a two-stage training approach to observe the impact of the initial training (pre-training) on subsequent training. In the numerical experiments, an intermediate objective function $g$ is introduced. The neural network is first trained to approximate the intermediate objective function within a fixed number of steps, followed by training to approximate the final objective function $f$ for another fixed number of steps. 

\begin{example}[pre-training on one-dimensional linear function approximation]\label{eg3} For objective $f(x)=10x$ on a interval $[-1,1]$, the four strategies are adopted:
A: $g(x)=-100x$, 
B: $g(x)=100x$, 
C: $g(x)=-10x$, 
D: approximate $f=10x$ directly.    

\end{example}

\begin{figure}[htb]
\centering
\includegraphics[width=1.0\textwidth,trim=50 5 50 20,clip]{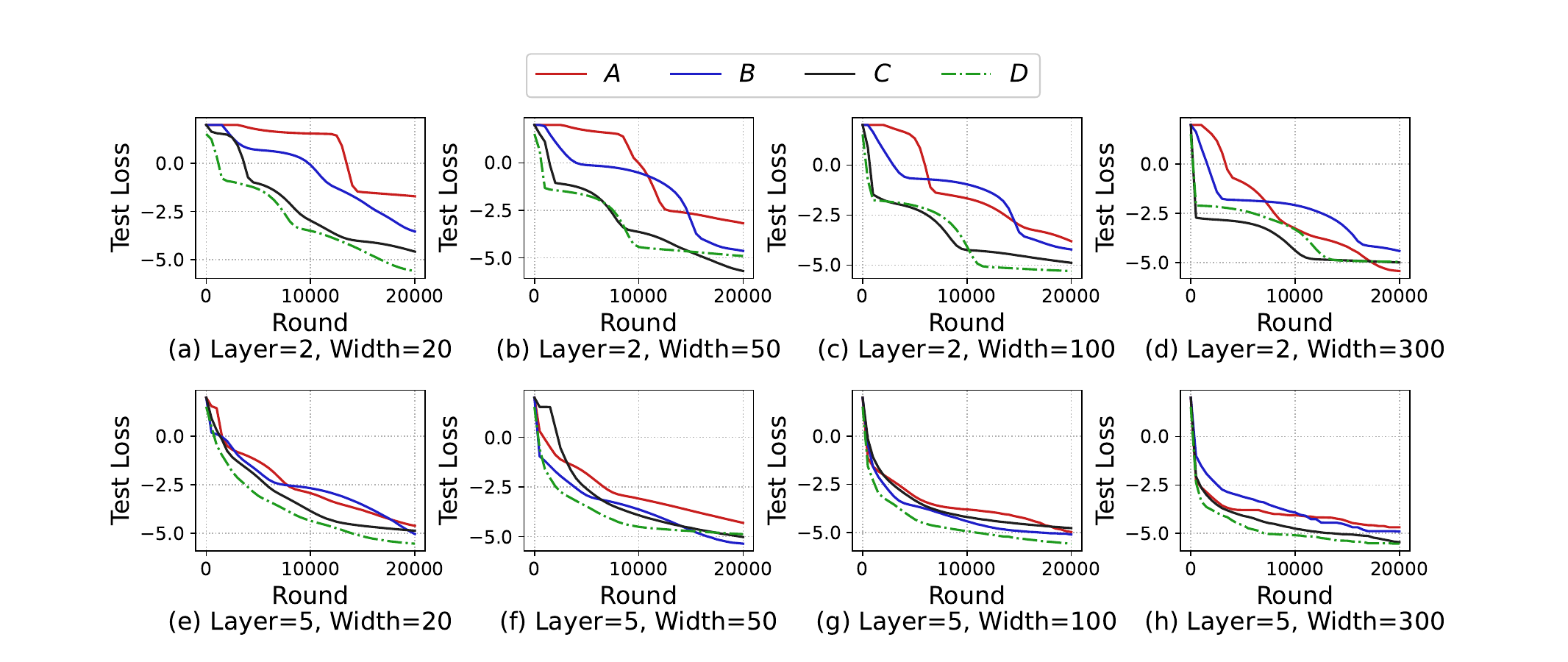}     
\caption{{\color{black}The approximation performance of different pre-training strategies.} The legends $A$, $B$, and $C$ represent the results where the neural network first approximates the {\color{black}intermediate} objective functions $g=-100x$, $g=100x$, and $g=-10x$, respectively, before subsequently approximating the objective function $f$. Legend $D$ corresponds to the scenario in which the neural network directly approximates the linear function $f$. The optimizer used is \texttt{Adam}, and the loss function is mean squared error (MSE). The learning rate is  \(10^{-3}\).}
\label{fig:linear4}
\end{figure}

{Fig.~\ref{fig:linear4} shows the training results for all training strategies. It can be seen that the direct approximation method has the lowest final test error in five initializations (($a$),($c$),($e$),($g$), and ($h$)). Among the six initialization strategies, Strategy A resulted in the largest final error (($a$),($b$),($c$),($e$),($f$), and ($h$)). These experimental results indicate that the greater the difference in slope between the intermediate objective function and the final objective function, the more difficult the second approximation becomes.
This suggests that pre-training is not always effective, and it is necessary to distinguish the impact of different pre-training objective functions on the final outcome. As discussed earlier, the slope can be regarded as a special case of the \texttt{VC}. In the following two subsections, we will build upon the concept of \texttt{VC} to define a novel \texttt{VC} norm and a \texttt{VC}-based pre-processing algorithm.

\subsection{A new distance for pre-processing}

Based on the previous analysis, this section gives the definition of a novel distance based on \texttt{VC} of different $L$ to observe the gap between the objective function and neural networks in the sense of \texttt{VC}.  
In order to measure the \texttt{VC} under different $L$ conditions (measuring the \texttt{VC} of functions at different scales), we first provide the definition of \texttt{IVC}.

\begin{definition}[\texttt{Integral VC (IVC)}]\label{IVC} 
{\color{black} 
The \texttt{IVC} of $f: \mathbb{R}^n \rightarrow\mathbb{R}$ on $[L_{min},L_{max}]$ is defined as 
\[\texttt{IVC}(f,{\bm x}) = \frac{1}{L_{max}-L_{min}}\int_{L_{min}}^{L_{max}} \texttt{VC}_{L}(f,{\bm x})dL. 
\]}
\end{definition}

Since the \texttt{VC} is related to the ratio $L$, the \texttt{VC} of a function f with different values of L reflects the variation of f across different local regions. The integral \texttt{VC} (\texttt{IVC}) provides a comprehensive evaluation of the \texttt{VC} over varying ratios of $L$. According to the definition of \texttt{IVC}, the \texttt{IVC} distance can be naturally defined.

\begin{definition}[\texttt{IVC} distance]\label{IVCD}  Suppose \(f:\mathbb{R}^n \rightarrow\mathbb{R}\), the \texttt{IVC} distance between the different functions \(f_1\) and \(f_2\) is defined as  
\(\texttt{Dist}_{\texttt{IVC}}(f_1,f_2)=\int_{\Omega}\texttt{IVC}(f_1-f_2,{\bm x})d{\bm x}.
\) 
\end{definition}
 
\begin{theorem}
\texttt{IVC} distance satisfies four fundamental properties of distance:

\begin{enumerate}
\item $0\leq \texttt{Dist}_{\texttt{IVC}}(f_1,f_2)<\infty$ for all $f_1$ and $f_2$ in the function space.

\item $\texttt{Dist}_{\texttt{IVC}}(f_1,f_2)=0$ if and only if $f_1=f_2+C$, where $C$ is a constant.

\item $\texttt{Dist}_{\texttt{IVC}}(f_1,f_2)=\texttt{Dist}_{\texttt{IVC}}(f_2,f_1)$ for all $f_1$ and $f_2$.

\item $\texttt{Dist}_{\texttt{IVC}}(f_1,f_3)\leq \texttt{Dist}_{\texttt{IVC}}(f_1,f_2)+\texttt{Dist}_{\texttt{IVC}}(f_2,f_3)$ for all $f_1, f_2, f_3$.
\end{enumerate}

\end{theorem}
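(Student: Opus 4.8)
The plan is to prove the four properties by successively peeling off the two integrations---over the scale parameter $L$ and over the domain $\Omega$---and reducing each claim to the corresponding statement for the pointwise quantity $\texttt{VC}_{L}$. Throughout I assume $f_1,f_2,f_3$ are continuous (hence bounded) on the compact box $\Omega$ and that $0\le L_{min}<L_{max}$. For Property~1, non-negativity is immediate since $\texttt{VC}_{L}$ is a supremum of absolute values, so the integrand defining $\texttt{IVC}$ is non-negative and the outer integral over $\Omega$ preserves this; finiteness follows from the boundedness remark of Section~\ref{sec3}, which gives $\texttt{VC}_{L}(f_1-f_2,\bm{x})\le 2\sup_{\Omega}|f_1-f_2|$ uniformly in $L$ and $\bm{x}$, whence $\texttt{IVC}(f_1-f_2,\cdot)$ is bounded and $\texttt{Dist}_{\texttt{IVC}}(f_1,f_2)\le 2|\Omega|\sup_{\Omega}|f_1-f_2|<\infty$.

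Property~3 and the ``if'' half of Property~2 are consequences of the affine-invariance theorem already established, $\texttt{VC}_{L}(\kappa g+c,\bm{x})=|\kappa|\,\texttt{VC}_{L}(g,\bm{x})$: taking $\kappa=0$ shows that $\texttt{VC}_{L}$ of a constant function vanishes, so $f_1-f_2\equiv C$ forces $\texttt{Dist}_{\texttt{IVC}}(f_1,f_2)=0$; taking $\kappa=-1$ gives $\texttt{VC}_{L}(f_1-f_2,\bm{x})=\texttt{VC}_{L}(f_2-f_1,\bm{x})$ pointwise, and integrating in $L$ and then in $\bm{x}$ yields symmetry.

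For the triangle inequality I would first prove a subadditivity estimate for $\texttt{VC}_{L}$ itself. Writing $f_1-f_3=(f_1-f_2)+(f_2-f_3)$, for every admissible pair $\bm{y}_1,\bm{y}_2$ the ordinary triangle inequality gives $|(f_1-f_3)(\bm{y}_1)-(f_1-f_3)(\bm{y}_2)|\le |(f_1-f_2)(\bm{y}_1)-(f_1-f_2)(\bm{y}_2)|+|(f_2-f_3)(\bm{y}_1)-(f_2-f_3)(\bm{y}_2)|\le \texttt{VC}_{L}(f_1-f_2,\bm{x})+\texttt{VC}_{L}(f_2-f_3,\bm{x})$; taking the supremum over $\bm{y}_1,\bm{y}_2$ on the left yields $\texttt{VC}_{L}(f_1-f_3,\bm{x})\le \texttt{VC}_{L}(f_1-f_2,\bm{x})+\texttt{VC}_{L}(f_2-f_3,\bm{x})$. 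Averaging this in $L$ over $[L_{min},L_{max}]$ transports it to $\texttt{IVC}$, and integrating over $\bm{x}\in\Omega$ transports it to $\texttt{Dist}_{\texttt{IVC}}$, which is Property~4; both operations are monotone and linear, so nothing is lost.

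The delicate point is the ``only if'' half of Property~2. From $\texttt{Dist}_{\texttt{IVC}}(f_1,f_2)=0$ and non-negativity of the integrand, $\texttt{IVC}(g,\bm{x})=0$ for almost every $\bm{x}$, where $g:=f_1-f_2$; continuity of $g$ makes $\bm{x}\mapsto\texttt{VC}_{L}(g,\bm{x})$ (and hence $\texttt{IVC}(g,\cdot)$) continuous, upgrading this to all $\bm{x}$. Since $L\mapsto\texttt{VC}_{L}(g,\bm{x})$ is non-negative and non-decreasing (a larger neighborhood can only enlarge the supremum), the vanishing of its integral over $[L_{min},L_{max}]$ forces $\texttt{VC}_{L}(g,\bm{x})=0$ for all $L<L_{max}$, hence for some fixed $L_0\in(0,L_{max})$ and every $\bm{x}$. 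Thus $g$ is constant on $\Omega\cap\prod_{i=1}^{n}[x_i-L_0/2,x_i+L_0/2]$ for each $\bm{x}$, i.e.\ $g$ is locally constant on $\Omega$; since the box $\Omega$ is connected, $g\equiv C$ on $\Omega$, i.e.\ $f_1=f_2+C$. I expect this last step---correctly performing the two measure-zero upgrades (via continuity and monotonicity), handling the endpoint cases $L_{min}=0$ or $L_{min}>0$, and invoking connectedness to pass from local to global constancy---to be the main obstacle; the remaining properties are essentially bookkeeping built on affine invariance and the sup-subadditivity of $\texttt{VC}_{L}$. (Strictly speaking the object is a pseudometric on functions and a genuine metric on the quotient by additive constants, which Property~2 already signals.)
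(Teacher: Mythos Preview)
Your proposal is correct and follows essentially the same strategy as the paper: establish the pointwise subadditivity $\texttt{VC}_{L}(f_1-f_3,\bm{x})\le \texttt{VC}_{L}(f_1-f_2,\bm{x})+\texttt{VC}_{L}(f_2-f_3,\bm{x})$ via the ordinary triangle inequality, then integrate in $L$ and in $\bm{x}$; and read Properties~1 and~3 directly off the definition (or, as you do, off affine invariance with $\kappa=-1$). The one place where you go beyond the paper is the ``only if'' direction of Property~2: the paper simply asserts that $\texttt{VC}_{L}(f_1-f_2,\bm{x})=0$ for all $\bm{x}$ follows from the vanishing integral and then jumps to the conclusion, whereas you correctly flag and resolve the measure-zero issues (continuity to pass from a.e.\ to everywhere, monotonicity in $L$ to pick a fixed $L_0$, and connectedness of $\Omega$ to upgrade local constancy to global constancy). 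Your treatment of that step is more complete than the paper's, but the underlying route is the same.
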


\begin{proof}

1. The correctness of this property can be obviously obtained through the definition of \texttt{VC}. 
2. It is obvious that $\texttt{VC}_L(f_1-f_2,{\bm x})$ and  $\texttt{IVC}(f_1-f_2,{\bm x})$ are equal to 0 when $f_1=f_2$. In the case where $\texttt{Dist}_{\texttt{IVC}}(f_1,f_2)=0$, we have 
\[
\texttt{Dist}_{\texttt{IVC}}(f_1,f_2)=\frac{1}{L_{max}-L_{min}}\int_{\Omega}\int_{L_{min}}^{L_{max}} \texttt{VC}_{L}(f_1-f_2,{\bm x})dLd{\bm x}=0.
\]
Since $\texttt{VC}_{L}$ is non-negative, $\texttt{VC}_{L}(f_1-f_2,{\bm x})=0$. From the definition of $\texttt{VC}$, $\forall\ {\bm x}\in\Omega$, we have $f_1({\bm x})=f_2({\bm x})$. 
3. From the definition of $\texttt{VC}$, commutativity of \texttt{IVC} distance is established. 
4. $\forall\ {\bm y}_1, {\bm y}_2 \in \Omega \cap \prod_{i=1}^{n} [{x}_i-L/2, {x}_i+L/2]$, we have
\[\begin{aligned}
|f_1({\bm y}_1)-f_3({\bm y}_1)-(f_1({\bm y}_2)-f_3({\bm y}_2))|
&\leq|f_1({\bm y}_1)-f_2({\bm y}_1)-(f_1({\bm y}_2)-f_2({\bm y}_2))|\\
&+|f_2({\bm y}_1)-f_3({\bm y}_1)-(f_2({\bm y}_2)-f_3({\bm y}_2))|. 
\end{aligned}\] 
Both sides take the supremum of ${\bm y}_1$ and ${\bm y}_2$ at the same time and then we can obtain that 
\[\texttt{VC}_L(f_1-f_3,{\bm x})\leq\texttt{VC}_L(f_1-f_2,{\bm x})+\texttt{VC}_L(f_2-f_3,{\bm x}).
\]
From the definitions of $\texttt{Dist}_{\texttt{IVC}}$ and \texttt{IVC}, the triangle inequality of \texttt{IVC} distance can be obtained.
\end{proof}

\begin{sloppypar}
{In Ex. \ref{eg3}, the pre-processed functions are approximately equal to (A) $-100x$, (B) $100x$, (C) $-10x$, and (D) $0$, respectively. {\color{black}This means that in \([a+L/2,b-L/2]\), \(\texttt{VC}_L(f-\psi_{pre},{\bm x})=110L (A), 90L (B),\) \( 20L(C), 10L(D)\).}  Therefore, under the same $L$ interval and spatial region $\Omega$, $\texttt{Dist}_{\texttt{IVC}}(f,\psi_{pre,A})>\texttt{Dist}_{\texttt{IVC}}(f,\psi_{pre,B})>\texttt{Dist}_{\texttt{IVC}}(f,\psi_{pre,C})>\texttt{Dist}_{\texttt{IVC}}(f,\psi_{pre,D})$. This sequence happens to be similar to the actual training results, which prompts us to design pre-processing (pre-processing) algorithms for neural networks based on $\texttt{Dist}_{\texttt{IVC}}$.}
\end{sloppypar}

\subsection{An accelerated framework for pre-processed neural network approximation}

The above results indicate that \texttt{VC} is related to the convergence of neural networks. To improve approximation, we propose \texttt{VC} to tune the neural network to improve approximation and save training costs. We compare the pre-processing accelerated methods with the original neural network approximation (without pre-processing) in numerical experiments.

\begin{algorithm}
\caption{Neural Network Approximation with Pre-processing Under {\ttfamily IVC}}\label{Alg1}
\begin{algorithmic}[1] \small 
\Require Sampling domain \( \Omega \), sample points \( \{\bm{x}_i\}_{i=1}^P \subset \Omega \), option flag \( \texttt{mode} \in \{\texttt{NN}, \texttt{SUR}\} \), pre-processing threshold \( \varepsilon > 0 \) 

\noindent \hspace{-0.7cm} {\bf Output:} Trained neural network model \( \psi \)

\State Estimate the {\ttfamily IVC} complexity of \( f \) from \( \{\bm{x}_i\} \), denoted as \( \widehat{\texttt{VC}} \)
 
\If{\( \texttt{mode} = \texttt{NN}\) (\texttt{VCP-NN})} 
    \State Design a compact neural network \( \psi_{\text{pre}} \) such that its \texttt{VC}  approximates \( \widehat{\texttt{VC}} \): {\color{black}for instance, }
    Train \( \psi_{\text{pre}} \) on \( \{(\bm{x}_i, f(\bm{x}_i))\}_{i=1}^P \) using an IVC-based loss {\color{black}or Least squares Loss} until 
    \(
    \texttt{Dist}_{\texttt{IVC}}(\psi_{\text{pre}}, f) \leq \varepsilon
    \) 
    \State Expand \( \psi_{\text{pre}} \) into a larger network \( \psi_0 \) by inserting additional inactive neurons (e.g., with zero weights) in the first layer, such that \( \psi_0(\bm{x}) = \psi_{\text{pre}}(\bm{x}) \) for all \( \bm{x} \)
    \State Initialize a trainable neural network \( \phi \leftarrow \psi_0 \)
    \State Train \( \phi \) on \( \{(\bm{x}_i, f(\bm{x}_i))\}_{i=1}^P \) using a user-specified loss function \( \mathcal{L}(\phi(\bm{x}), f(\bm{x})) \)
    \State Set final model \( \psi = \phi \)

\ElsIf{\( \texttt{mode} = \texttt{SUR} \) (\texttt{VCP-obj})} 
    \State Construct a surrogate function \( f_{\text{appr}} \) (e.g., via polynomial or other training-free interpolation or regression) such that
    \(
    \texttt{Dist}_{\texttt{IVC}}(f_{\text{appr}}, f) \leq \varepsilon
    \)
    \State Initialize a neural network \( \phi\) with random weights
    \State Train \( \phi \) on \( \{(\bm{x}_i, f(\bm{x}_i) - f_{\text{appr}}(\bm{x}_i))\}_{i=1}^P \) using user-specified loss \( \mathcal{L}(\phi(\bm{x}), f(\bm{x}) - f_{\text{appr}}(\bm{x})) \)
    \State Set final model \( \psi(\bm{x}) = \phi(\bm{x}) + f_{\text{appr}}(\bm{x}) \)

\EndIf

\State \Return \( \psi \)
\end{algorithmic}
\end{algorithm}

 The core idea of the pre-processing algorithm (Algorithm \ref{Alg1}) is to leverage pre-processing or pre-training techniques to transform the neural network such that it approximates the objective function in terms of \texttt{IVC} distance. This transformation aims to simplify the subsequent training process, either by reducing training time or improving accuracy. Depending on the role of the \texttt{IVC} distance, the algorithm can be divided into two distinct frameworks.

In the first framework, the goal is to design a neural network pretraining process such that the resulting network approximates the objective function in terms of \texttt{IVC} distance. The pretraining process consists of two main stages: training and expansion. Initially, a compact neural network is trained using the \texttt{IVC} distance as the loss function to approximate the shape of the objective function. Subsequently, the compact network is expanded to obtain a preprocessed neural network. This preprocessed model is then fine-tuned using conventional training methods to further approximate the objective function.
In the second framework, the objective is to construct a surrogate function $f_{\rm appr}$ that approximates the objective function in terms of \texttt{IVC} distance. This surrogate function serves as a pre-processing model. A new neural network is then trained to approximate $f-f_{\rm appr}$, thereby indirectly approximating the original objective function. 
{\color{black} The flow past a cylinder example in Section \ref{cylinder} is used to demonstrate the impact of the interpolation.
Fig.~\ref{VCDvsL2} presents a comparison between the $L^2$ distance and the $\texttt{Dist}_{\texttt{IVC}}$ between the surrogate function and the objective function under varying numbers of interpolation points. At lower numbers of interpolation points, the $L^2$ distance between the interpolating and objective functions exhibits a more pronounced reduction; with increasing numbers of points, the $\texttt{Dist}_{\texttt{IVC}}$ also declines rapidly.}

\begin{figure}
    \centering
    \includegraphics[width=0.6\linewidth]{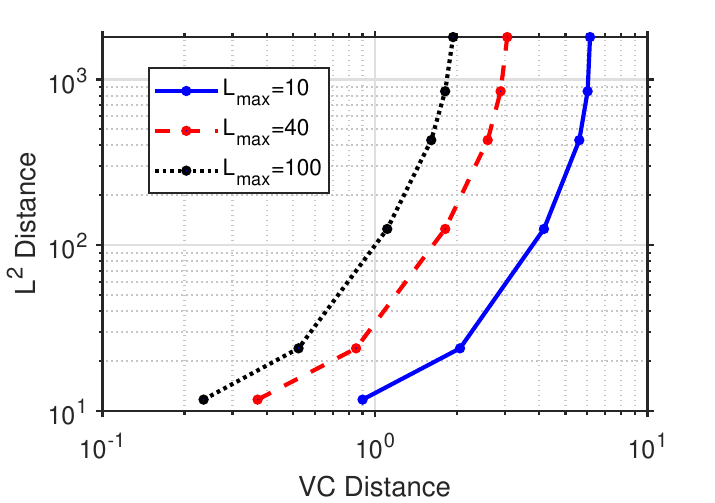}
    \caption{Synchronous decrease of $L^2$ error and \texttt{VC}-{\color{black}norm} error with varying numbers of interpolation points. }
    \label{VCDvsL2}
\end{figure}

\section{Numerical results}\label{sec6}

\subsection{Pre-processing on three-dimensional linear function approximation}

To begin, we investigate the effectiveness of the pre-processing algorithm using a simple linear function approximation task. The experiment compares four approaches: the original approximation algorithm, pre-processing with \texttt{VCP-NN}, pre-processing with \texttt{VCP-obj}, and a modified version of \texttt{VCP-obj}. To maintain consistency, the objective function is chosen to match the example in Section \ref{eg_lin}: a three-dimensional linear function $f(x,y,z)=10x + 10y + 10z$ defined on the region $[-1, 1]\times[-1, 1]\times[-1, 1]$. 
{We design four groups of experiments (i.e. A, B, C, and D), and the experimental settings are as follows}:  A: We first utilize the neural network to approximate the function $5x+5y+5z$. Then the pre-processed neural network approximates the function $f=10x+10y+10z$. B: We approximate the function $f=10x+10y+10z$ directly. C: Define a pre-adjust function $g_n$, and let neural network+$g_{n}$ approximate $f=10x+10y+10z$. The neural network is learnable and the function $g_n$ is frozen.  D: Define a pre-adjust function $g_n$, and let neural network +$g_n/2$ approximate $f=10x+10y+10z$. The neural network is learnable and the function $g_n$ is frozen. 
Fig.~\ref{fig:adjust-model-2} exhibits the loss curves of the above pre-processing methods and original neural network methods. It can be observed that all three A, C, D methods achieve faster reductions in test error compared to the original direct approximation approach under various neural network configurations. Specifically, method A corresponds to a variant of \texttt{VCP-NN} in the pre-processing algorithm. When approximating the objective function $f(x,y,z)$, this approach yields a pre-trained model $\psi_{pre}$ satisfying $\texttt{Dist}_{\texttt{IVC}}(\psi_{\text{pre}},f)\lesssim\texttt{Dist}_{\texttt{IVC}}(\psi_{nn},f)$. The neural network with the smaller \texttt{VC}-distance is thus able to approximate the objective function more rapidly than direct training from scratch. 
Methods C and D correspond to the \texttt{VCP-obj} operation in the pre-processing algorithm, where the auxiliary functions $g_n$ are obtained via an interpolation scheme. The difference lies in the neural network training target: in method C, the network approximates $f-g_n$, whereas in method D, it approximates $f-g_n/2$. This distinction leads to a smaller \texttt{VC}-distance in method C, i.e., $\texttt{Dist}_{\texttt{IVC}}(f_{nn}+g_n,f)\lesssim\texttt{Dist}_{\texttt{IVC}}(f_{nn}+g_n/2,f)$. Experimental results also show that the test error during training in method C is significantly lower than that in method D.

\begin{figure}[htbp]
\centering
\includegraphics[width=1.0\textwidth,trim=50 5 50 20,clip]{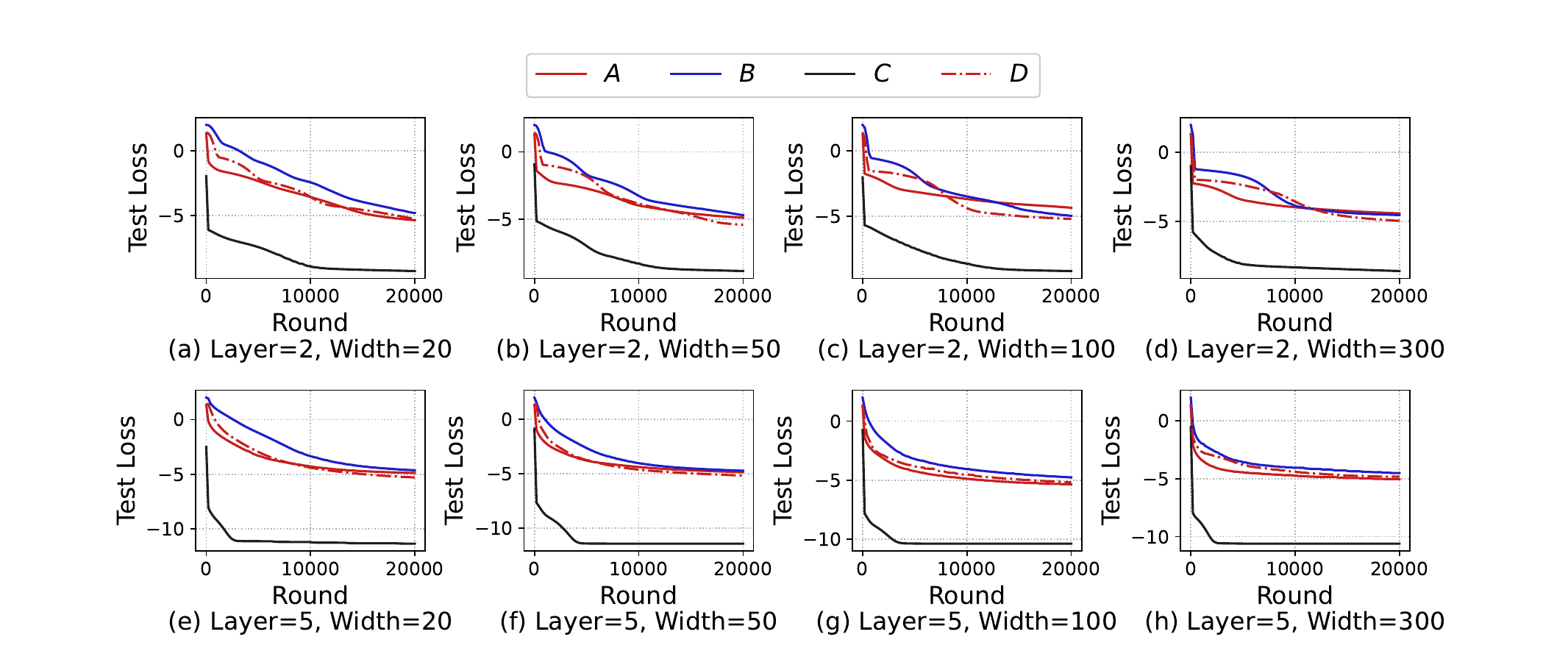}     
\caption{{\color{black}Pre-processing for linear function approximation.} The relation of test loss and round when adjusting the initial model and modeling $f=10x+10y+10z$ with a domain $[-1,1]^3$ under the logarithmic vertical axis scale. The optimizer is \texttt{Adam} and the loss is \texttt{MSE}. The learning rate is  \(10^{-2}\).}
\label{fig:adjust-model-2}
\end{figure}

\begin{figure}[htb]
\centering
\includegraphics[width=1.0\textwidth,trim=50 5 50 20,clip]{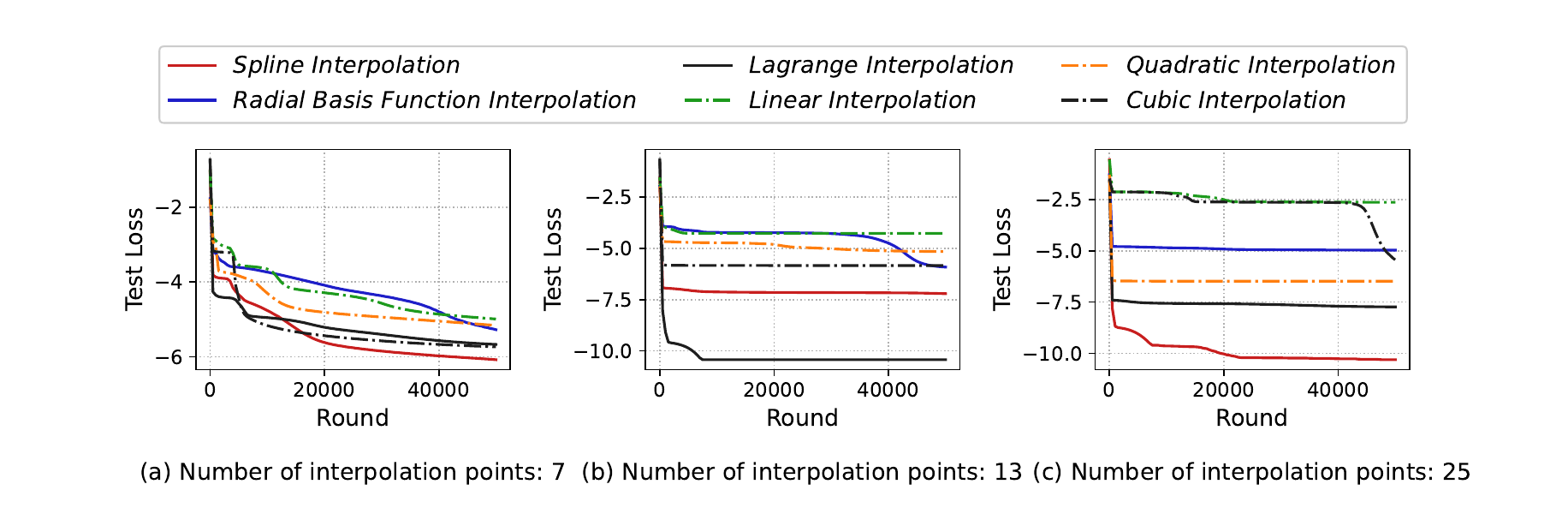}     
\caption{Relation of test loss and round when fitting  $sin(x)-g$ under the logarithmic vertical axis scale with layer number set to 5 and width set to 50. The function $g$ is the interpolation function. The optimizer is \texttt{Adam} and the loss is MSE. The learning rate is  \(10^{-3}\). The number of interpolation points is selected from \{7, 13, 25\}.}
\label{fig:sin2}
\end{figure}

\subsection{Pre-processing on monochrome image approximation}

We examine how pre-processing contributes to improving performance in two-dimensional image approximation tasks. In the process of function approximation for images, we adopt three different approximation strategies. The basic approach is the direct approximation using a neural network. The second strategy employs the \texttt{VCP-NN} method, in which we first divide the objective function by $2$ and approximate this simplified model using a neural network, thereby reducing the VC distance between the NN and the objective function. This is then compared with the direct approximation approach. The third strategy utilizes the \texttt{VCP-obj} method, where a preprocessed interpolated surrogate model is first used to approximate the objective function, followed by neural network approximation of the difference $f-f_{\rm appr}$. Here we use fully connected neural networks with $5$ layers and either $50$ or $300$ neurons per layer.

\begin{figure}[htb]
\centering
\includegraphics[width=1.0\textwidth,trim=50 0 0 0,clip]{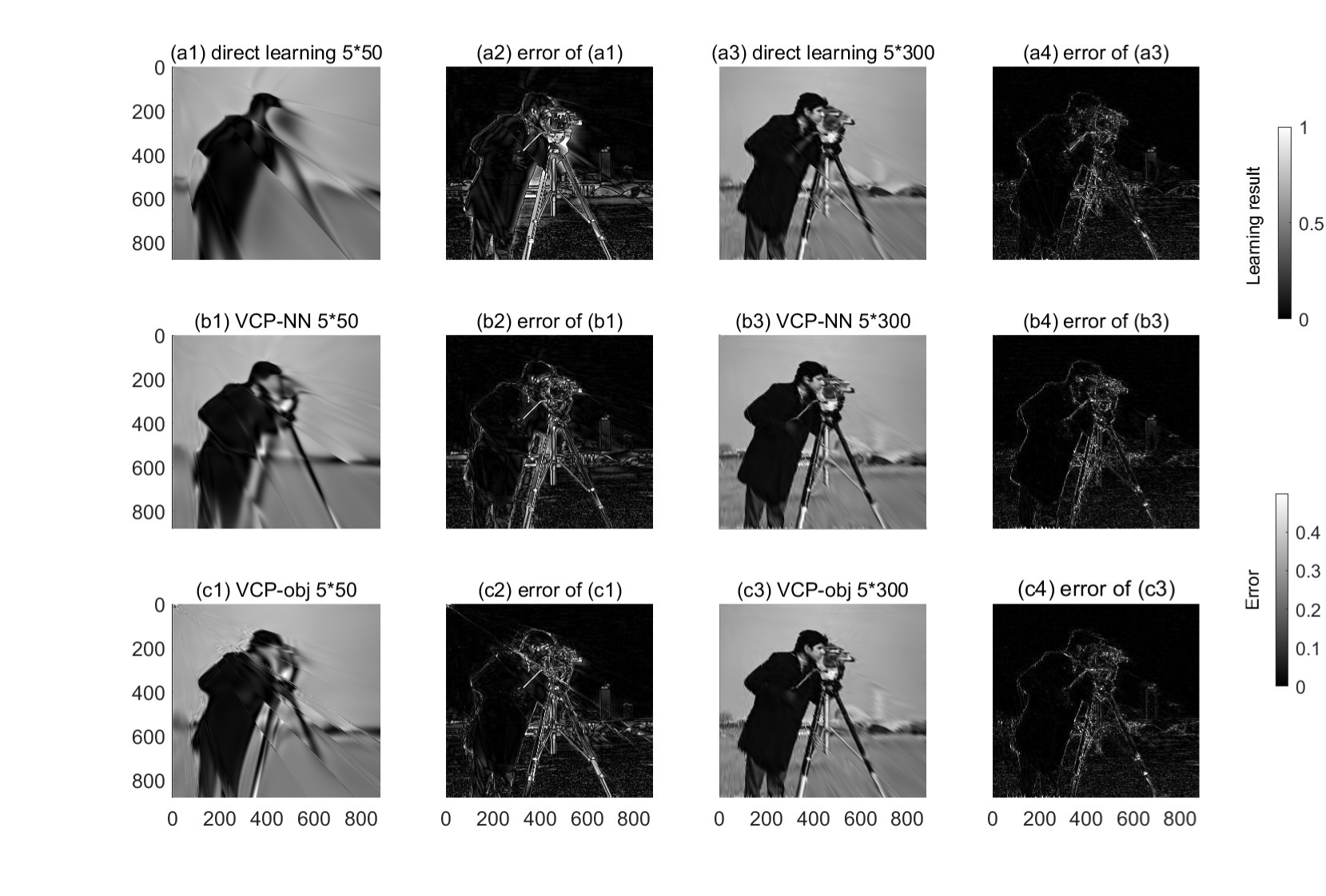}   
\caption{{\color{black}Pre-processing for image approximation.} The 1st and 3rd columns show the training results of three methods for approximating monochrome pictures in two neural network settings. The other columns show the final absolute error.}
\label{fig:compare}
\end{figure}

\begin{figure}[htb]
\centering
\includegraphics[width=0.6\textwidth,trim=0 0 0 0,clip]{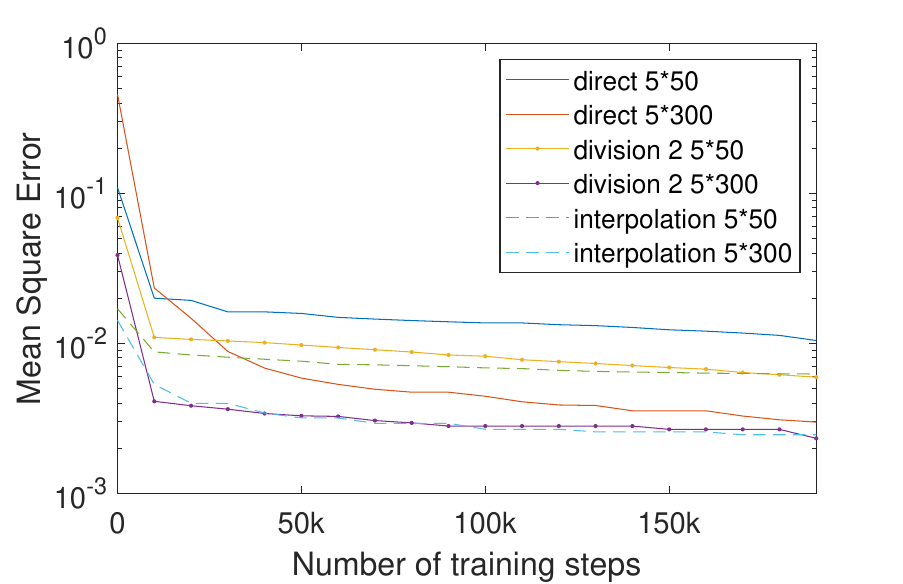}   
\caption{The six curves represent the decreasing trend of test absolute error for three training strategies under two neural network settings.}
\label{fig:compare1}
\end{figure} 

Fig.~\ref{fig:compare} presents the experimental results of image approximation. The two leftmost columns indicate that, under a FCNN with architecture $5\times50$, both preprocessing strategies can effectively improve the approximation performance. However, due to the limited capacity of the network, significant errors remain. After increasing the network width, the approximation performance of all three strategies improves markedly. Fig.~\ref{fig:compare1} shows the evolution of the test error over the course of training for all six experiments corresponding to the three strategies. The results demonstrate that the test errors of the two preprocessing-based methods (represented by dashed and dotted lines) are significantly lower than those of the original method. This indicates that the two preprocessing approaches can either reduce the training time required to achieve a given level of accuracy or improve the final accuracy within the same training time.

\subsection{Pre-processing for 2-dimensional flow past a cylinder}

In this section, we employ the previously introduced example of cylinder flow to assess the effectiveness of the proposed VCP algorithm in scientific computing, using numerical results reported in \cite{trebotich2015adaptive}. Specifically, we adopt the \texttt{VCP-obj} preprocessing strategy, in which a surrogate model is first constructed to approximate the objective data in terms of the \texttt{IVC} distance. A neural network is then trained to approximate the discrepancy between the surrogate model and the objective data, and the final prediction of the objective data is obtained by combining the outputs of the surrogate model and the neural network.

\begin{figure}[htb]
    \centering
 
    \begin{subfigure}{0.23\textwidth}
        \includegraphics[width=\linewidth, trim=30 15 15 31, clip]{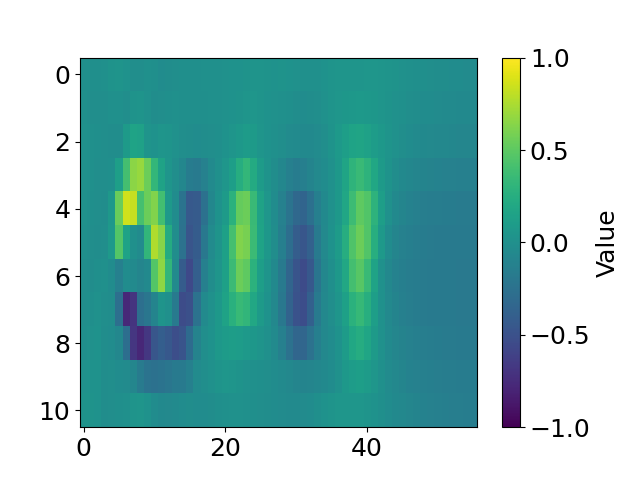} 
    \end{subfigure}
    \begin{subfigure}{0.23\textwidth}
        \includegraphics[width=\linewidth, trim=30 15 15 31, clip]{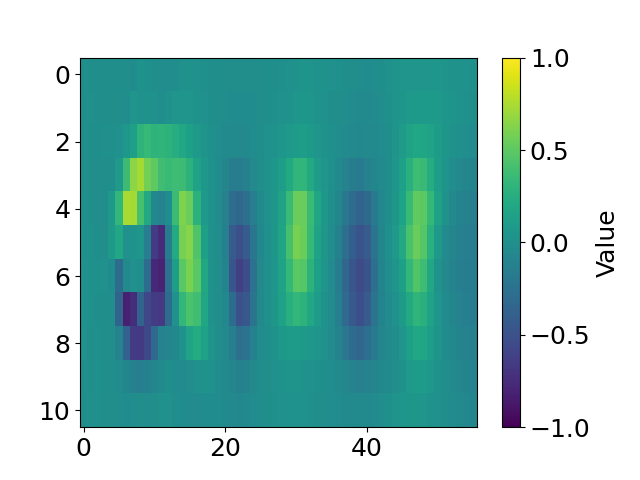} 
    \end{subfigure}
    \begin{subfigure}{0.23\textwidth}
        \includegraphics[width=\linewidth, trim=30 15 15 31, clip]{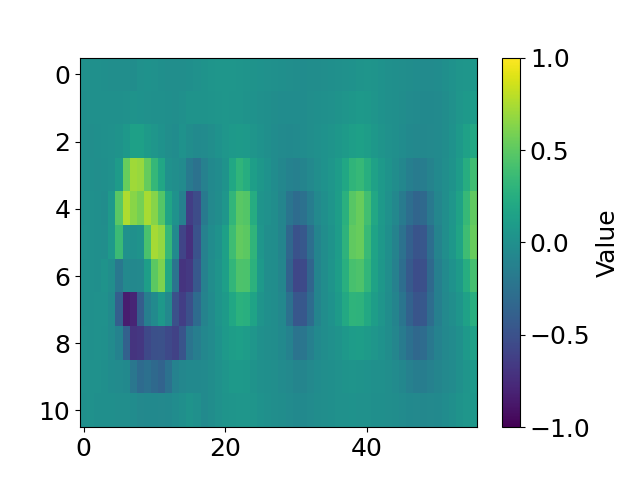} 
    \end{subfigure}
    \begin{subfigure}{0.23\textwidth}
        \includegraphics[width=\linewidth, trim=30 15 15 31, clip]{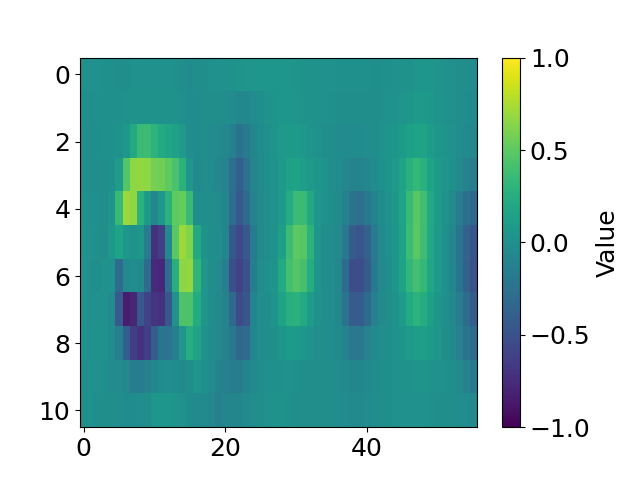} 
    \end{subfigure}
 
    \begin{subfigure}{0.23\textwidth}
        \includegraphics[width=\linewidth, trim=30 15 15 31, clip]{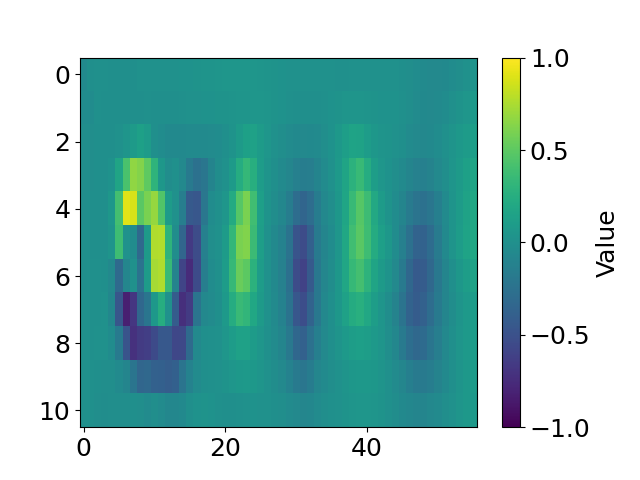} 
    \end{subfigure}
    \begin{subfigure}{0.23\textwidth}
        \includegraphics[width=\linewidth, trim=30 15 15 31, clip]{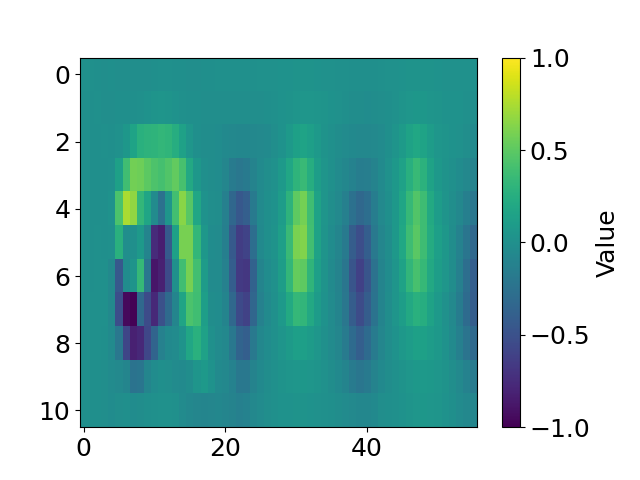} 
    \end{subfigure}
    \begin{subfigure}{0.23\textwidth}
        \includegraphics[width=\linewidth, trim=30 15 15 31, clip]{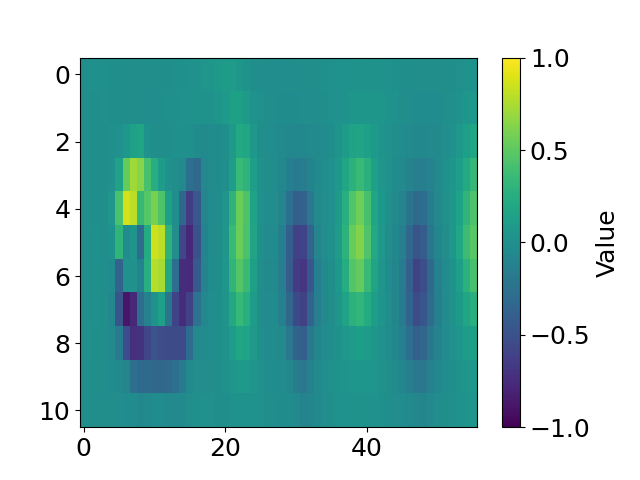} 
    \end{subfigure}
    \begin{subfigure}{0.23\textwidth}
        \includegraphics[width=\linewidth, trim=30 15 15 31, clip]{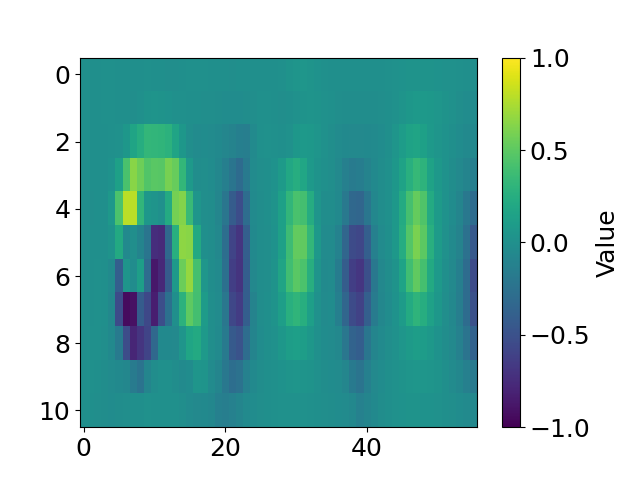} 
    \end{subfigure}

    \begin{subfigure}{0.23\textwidth}
        \includegraphics[width=\linewidth, trim=30 15 15 31, clip]{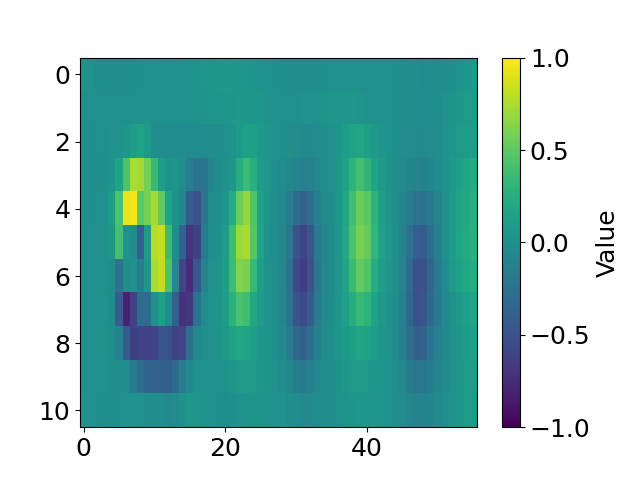}
        \caption*{$t=0$}
    \end{subfigure}
    \begin{subfigure}{0.23\textwidth}
        \includegraphics[width=\linewidth, trim=30 15 15 31, clip]{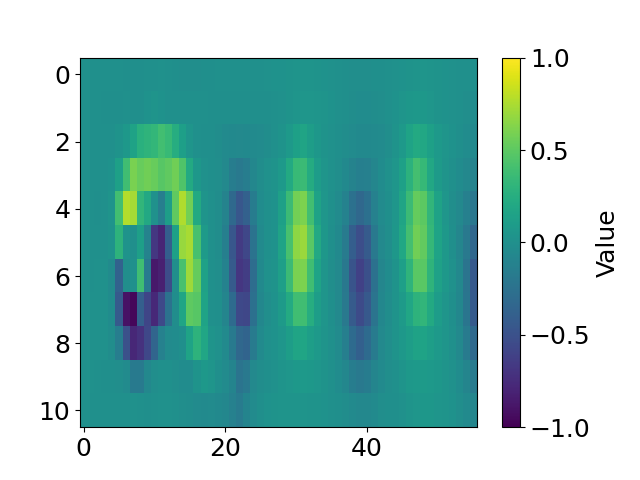}
        \caption*{$t=3$}
    \end{subfigure}
    \begin{subfigure}{0.23\textwidth}
        \includegraphics[width=\linewidth, trim=30 15 15 31, clip]{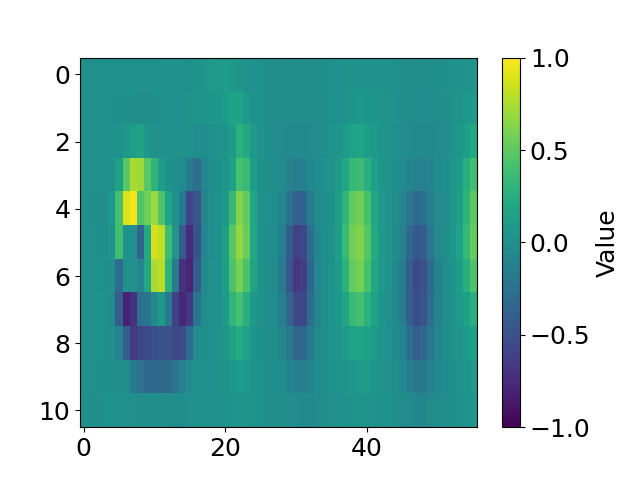}
        \caption*{$t=6$}
    \end{subfigure}
    \begin{subfigure}{0.23\textwidth}
        \includegraphics[width=\linewidth, trim=30 15 15 31, clip]{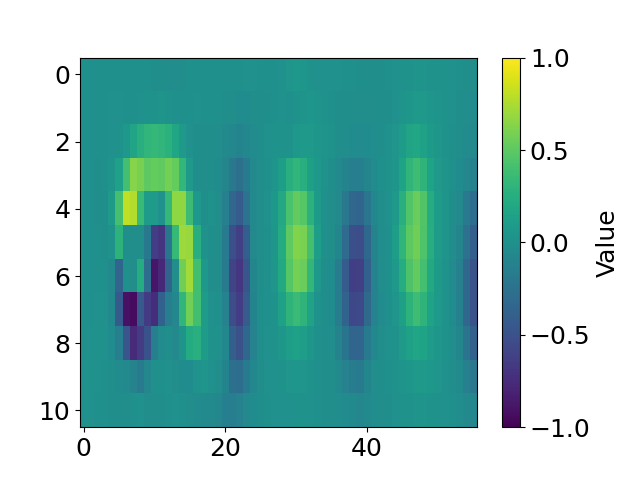}
        \caption*{$t=9$}
    \end{subfigure}

    \caption{Approximation results across different time steps and pre-processings: rows indicate increasing levels of pre-processing (none, brief, sufficient), columns show time steps.}
    \label{fig:fit_results_pde}
\end{figure}

This experiment employs a fully connected neural network (FCNN) on 2-dimensional flow past a cylinder data, with the following structural and hyperparameter specifications: The input layer dimension is set to 3, corresponding to the feature vector dimensionality of the 2-dimensional flow past a cylinder data, while the output layer is fixed to 1. The model adopts a deep architecture comprising 8 hidden layers, each uniformly configured with 50 neurons to hierarchically extract high-dimensional spatiotemporal patterns. Optimization is performed using Adam with a learning rate of 0.01 to accelerate convergence. A batch size of 512 is selected. The training process spans 30,000 epochs to ensure comprehensive parameter space exploration. For dynamic monitoring, loss values are recorded at intervals of 100 epochs to facilitate convergence analysis and training dynamics visualization.  
Fig.~\ref{fig:fit_results_pde} presents the results of three experiments—direct approximation, brief \texttt{VCP-obj} approximation (interpolation-based pre-processing with partial data points), and sufficient \texttt{VCP-obj} approximation (interpolation-based pre-processing with all data points). The differences among the three approaches are most evident in the downstream region of the cylinder flow. In the case of direct approximation, the neural network fails to capture the downstream behavior, suggesting that the poor performance is likely due to the limited approximation capacity of the network, despite sufficient training. By contrast, when the same neural network is applied with interpolation-based preprocessing using partial data points, the approximation improves substantially. This indicates that the interpolation-based preprocessing method (i.e., the \texttt{VCP-obj} approach) can effectively compensate for the network’s insufficient approximation capacity in certain problems. Moreover, using all interpolation points yields results similar to those obtained with partial interpolation, suggesting that in practice only a subset of data points is sufficient for the \texttt{VCP-obj} method.

\section{Conclusions}\label{sec7}

In this paper, we proposed the \texttt{VC} metric, a novel approach to {evaluate} local performance and behavior in neural network approximations. Such a new metric addresses the challenge of unpredictable local performance in neural networks, providing a quantifiable measure to enhance the assessment stability and guide the improvement of network design and training. We explored the basic properties of \texttt{VC}, proved its rationality, and showed its usefulness in practical applications. A minority-tendency phenomenon in neural network approximation is found. Our theoretical contributions also include the introduction of the \texttt{VC density}, providing a probabilistic view of function-value changes in function behavior. We established key properties of \texttt{VC}, such as its affine invariance, demonstrating the robustness of the metric across various transformations. We give a pre-processing framework for the neural network approximation.

{\bf Acknowledgments.} 
The authors would like to thank Dr. Stefan M. Wild (Lawrence Berkeley National Laboratory) for his valuable suggestions and guidance. We are also grateful to Dr. David Trebotich (Lawrence Berkeley National Laboratory) for his assistance with the implementation and codes for the example ``2-dimensional flow past a cylinder.'' This work was done before the corresponding author joined LBNL.

\bibliographystyle{siamplain}
\bibliography{references}

\end{document}